\newcommand{\eg}{\textit{e.g.}}
\newcommand{\ie}{\textit{i.e.}}
\newtheorem{theorem}{Theorem}
\newcommand{\R}{\mathbb{R}}
\DeclareMathOperator*{\E}{\mathbb{E}}
\newcommand{\dd}{d_\mathrm{data}}
\newcommand{\Md}{\mathcal{M}_{\dd}}
\newcommand{\dr}{d_\mathrm{repr}}
\newcommand{\KF}{K_\mathcal{F}}
\newcommand{\KFh}{\hat{K}_\mathcal{F}}
\newcommand{\Dtr}{\mathcal{D}_\mathrm{train}}
\newcommand{\Dts}{\mathcal{D}_\mathrm{test}}
\newcommand{\zz}{\hat{z}}
\newcommand{\xh}{\hat{x}}
\title{The Effect of Intrinsic Dataset Properties on Generalization: Unraveling Learning Differences Between Natural and Medical Images}
\author{Nicholas Konz$^{1}$, Maciej A. Mazurowski$^{1,2,3,4}$ \\
$^{1}$ Department of Electrical and Computer Engineering, $^{2}$ Department of Radiology, \\
$^{3}$ Department of Computer Science, $^{4}$ Department of Biostatistics \& Bioinformatics \\
Duke University, NC, USA \\
\texttt{\{nicholas.konz, maciej.mazurowski\}@duke.edu} \\
}
\begin{document}

\maketitle

\begin{abstract}
This paper investigates discrepancies in how neural networks learn from different imaging domains, which are commonly overlooked when adopting computer vision techniques from the domain of natural images to other specialized domains such as medical images. Recent works have found that the generalization error of a trained network typically increases with the intrinsic dimension ($\dd$) of its training set. Yet, the steepness of this relationship varies significantly between medical (radiological) and natural imaging domains, with no existing theoretical explanation. We address this gap in knowledge by establishing and empirically validating a generalization scaling law with respect to $\dd$, and propose that the substantial scaling discrepancy between the two considered domains may be at least partially attributed to the higher intrinsic ``label sharpness'' ($\KF$) of medical imaging datasets, a metric which we propose.
Next, we demonstrate an additional benefit of measuring the label sharpness of a training set: it is negatively correlated with the trained model's adversarial robustness, which notably leads to models for medical images having a substantially higher vulnerability to adversarial attack. 
Finally, we extend our $\dd$ formalism to the related metric of learned representation intrinsic dimension ($\dr$), derive a generalization scaling law with respect to $\dr$, and show that $\dd$ serves as an upper bound for $\dr$. Our theoretical results are supported by thorough experiments with six models and eleven natural and medical imaging datasets over a range of training set sizes. Our findings offer insights into the influence of intrinsic dataset properties on generalization, representation learning, and robustness in deep neural networks.\footnote{Code link: \url{https://github.com/mazurowski-lab/intrinsic-properties}.}
\end{abstract}

\section{Introduction}

There has been recent attention towards how a neural network's ability to generalize to test data relates to the \textit{intrinsic dimension} $\dd$ of its training dataset, \ie, the dataset's inherent ``complexity'' or the minimum degrees of freedom needed to represent it without substantial information loss \citep{gong2019intrinsic}. Recent works have found that generalization error typically increases with $\dd$, empirically \citep{pope2021intrinsic} or theoretically \citep{bahri2021explaining}.
Such ``scaling laws'' with respect to intrinsic dataset properties are attractive because they may describe neural network behavior in generality, for different models and/or datasets, allowing for better understanding and predictability of the behavior, capabilities, and challenges of deep learning.
However, a recent study \citep{konz2022intrinsic} showed that generalization scaling behavior differs drastically depending on the input image type, \eg, natural or medical images, showing the non-universality of the scaling law and motivating us to consider its relationship to properties of the dataset and imaging domain.\footnote{Here we take ``medical'' images to refer to radiology images (\eg, x-ray, MRI), the most common type.}

In this work, we provide theoretical and empirical findings on how measurable intrinsic properties of an image dataset can affect the behavior of a neural network trained on it. We show that certain dataset properties that differ between imaging domains can lead to discrepancies in behaviors such as generalization ability and adversarial robustness. Our contributions are summarized as follows.

First, we introduce the novel measure of the intrinsic \textbf{label sharpness} ($\KF$) of a dataset (defined in Section \ref{sec:KFest}).
The label sharpness essentially measures how similar images in the dataset can be to each other while still having different labels, and we find that it usually differs noticeably between natural and medical image datasets.
We then derive and test a neural network generalization scaling law with respect to dataset intrinsic dimension $\dd$, which includes $\KF$. Our experiments support the derived scaling behavior within each of these two domains, and show a distinct difference in the scaling rate between them. According to our scaling law and likelihood analysis of observed generalization data (Appendix C.1), this may be due to the measured $\KF$ being typically higher for medical datasets.

Next, we show how a model's adversarial robustness relates to its training set's $\KF$, and show that over a range of attacks, robustness decreases with higher $\KF$. Indeed, medical image datasets, which have higher $\KF$, \textbf{are typically more susceptible to adversarial attack than natural image datasets}. 
Finally, we extend our $\dd$ formalism to derive and test a generalization scaling law with respect to the intrinsic dimension of the model's \textit{learned representations}, $\dr$, and reconcile the $\dd$ and $\dr$ scaling laws to show that $\dd$ serves as an approximate upper bound for $\dr$. 
We also provide many additional results in the supplementary material, such as a likelihood analysis of our proposed scaling law given observed generalization data (Appendix C.1), the evaluation of a new dataset in a third domain (Appendix C.2), an example of a practical application of our findings (Appendix C.3), and more.

All theoretical results are validated with thorough experiments on six CNN architectures and eleven datasets from natural and medical imaging domains over a range of training set sizes. 
We hope that our work initiates further study into how network behavior differs between imaging domains.

\section{Related Works}

We are interested in the scaling of the generalization ability of supervised convolutional neural networks with respect to intrinsic properties of the training set. Other works have also explored generalization scaling with respect to parameter count or training set size for vision or other modalities \citep{caballero2023broken, kaplan2020scaling, hoffmann2022training, touvron2023llama}.
Note that we model the intrinsic dimension to be constant throughout the dataset's manifold as in \citet{pope2021intrinsic, bahri2021explaining} for simplicity, as opposed to the recent work of \citet{brown2023verifying}, which we find to be suitable for interpretable scaling laws and dataset properties.


Similar to  \textit{dataset} intrinsic dimension scaling \citep{pope2021intrinsic, bahri2021explaining, konz2022intrinsic}, recent works have also found a monotonic relationship between a network's generalization error and the intrinsic dimension of both the learned hidden layer representations \citep{ansuini2019intrinsic}, or some measure of intrinsic dimensionality of the trained model itself \citep{birdal2021intrinsic, andreeva2023metric}. In this work, we focus on the former, as the latter model dimensionality measures are typically completely different mathematical objects than the intrinsic dimension of the manifolds of data or representations. Similarly, \citet{kvinge2023exploring} found a correlation between prompt perplexity and representation intrinsic dimension in Stable Diffusion models.

\section{Preliminaries}
\label{sec:preliminaries}

We consider a binary classification dataset $\mathcal{D}$ of points $x\in\R^n$ with target labels $y=\mathcal{F}(x)$ defined by some unknown function $\mathcal{F}:\R^n\rightarrow \{0,1\}$, split into a training set $\Dtr$ of size $N$ and test set $\Dts$. The manifold hypothesis \citep{fefferman2016manifoldhyp} assumes that the input data $x$ lies approximately on some $\dd$-dimensional manifold $\Md\subset\R^n$, with $\dd\ll n$. More technically, $\Md$ is a metric space such that for all $x\in\Md$, there exists some neighborhood $U_x$ of $x$ such that $U_x$ is homeomorphic to $\R^{\dd}$, defined by the standard $L_2$ distance metric $||\cdot||$.

As in \citet{bahri2021explaining}, we consider over-parameterized (number of parameters $\gg N$) models $f(x):\R^n\rightarrow \{0, 1\}$, that are ``well-trained'' and learn to interpolate all training data: $f(x)=\mathcal{F}(x)$ for all $x\in \Dtr$. We use a non-negative loss function $L$, such that $L=0$ when $f(x)=\mathcal{F}(x)$. Note that we write $L$ as the expected loss over a set of test set points. We assume that $\mathcal{F}$, $f$ and $L$ are Lipschitz/smooth on $\Md$ with respective constants $\KF$, $K_f$ and $K_L$. Note that we use the term ``\textit{Lipschitz constant}'' of a function to refer to the smallest value that satisfies the Lipschitz inequality.\footnote{A subtlety here is that our Lipschitz assumptions only involve pairs of datapoints sampled from the true data manifold $\Md$; adversarially-perturbed images \citep{DBLP:journals/corr/GoodfellowSS14} are not included.}
We focus on binary classification as in \citet{pope2021intrinsic, konz2022intrinsic}, but we note that our results extend naturally to the multi-class case (see Appendix A.1 for more details).

\subsection{Estimating Dataset Intrinsic Dimension}
\label{sec:dest}
Here we introduce two common intrinsic dimension estimators for high-dimensional datasets that we use in our experiments, which have been used in prior works on image datasets \citep{pope2021intrinsic, konz2022intrinsic} and learned representations \citep{ ansuini2019intrinsic, gong2019intrinsic}.

\textbf{MLE:} The MLE (maximum likelihood estimation) intrinsic dimension estimator \citep{levina2004maximum, mackay2005comments} works by assuming that the number of datapoints enclosed within some $\epsilon$-ball about some point on $\Md$ scales not as $\mathcal{O}(\epsilon^n)$, but $\mathcal{O}(\epsilon^{\dd})$, and then solving for $\dd$ with MLE after modeling the data as sampled from a Poisson process. This results in $\hat{d}_{\mathrm{data}}=\left[\frac{1}{N(k-1)} \sum_{i=1}^{N} \sum_{j=1}^{k-1} \log \frac{T_{k}\left(x_{i}\right)}{T_{j}\left(x_{i}\right)}\right]^{-1}$, where $T_j(x)$ is the $L_2$ distance from $x$ to its $j^{th}$ nearest neighbor and $k$ is a hyperparameter; we set $k=20$ as in \citet{pope2021intrinsic, konz2022intrinsic}.
\textbf{TwoNN:} TwoNN \citep{facco2017estimating} is a similar approach that instead relies on the ratio of the first- and second-nearest neighbor distances. We default to using the MLE method for $\dd$ estimation as \citet{pope2021intrinsic} found it to be more reliable for image data than TwoNN, but we still evaluate with TwoNN for all experiments. Note that these estimators do not use datapoint labels.

\subsection{Estimating Dataset Label Sharpness}
\label{sec:KFest}
Another property of interest is an empirical estimate for the ``label sharpness'' of a dataset, $\KF$. This measures the extent to which images in the dataset can resemble each other while still having different labels. Formally, $\KF$  is the Lipschitz constant of the ground truth labeling function $\mathcal{F}$, $\ie$, the smallest positive $\KF$ that satisfies $\KF ||x_1 - x_2|| \geq |\mathcal{F}(x_1) - \mathcal{F}(x_2)| =|y_1 - y_2|$ for all $x_1,x_2\sim \Md$, where $y_i=\mathcal{F}(x_i)\in\{0,1\}$ is the target label for $x_i$. We estimate this as
\begin{equation}
\label{eq:KFest}
     \KFh := \max\limits_{j, k} \left(\frac{|y_j-y_k|}{|| x_j - x_k||}\right),
\end{equation}
computed over all $M^2$ pairings $((x_j, y_j), (x_k, y_k))$ of some $M$ evenly class-balanced random samples $\left\{(x_i, y_i) \right\}_{i=1}^M$ from the dataset $\mathcal{D}$. We use $M=1000$ in practice, which we found more than sufficient for a converging estimate, and it takes $<$1 sec. to compute $\KFh$.
We minimize the effect of trivial dataset-specific factors on $\KFh$ by linearly normalizing all images to the same range (Sec. \ref{sec:data}), and we note that both $\KFh$ and $\dd$ are invariant to image resolution and channel count (Appendix B.1). As the natural image datasets have multiple possible combinations of classes for the binary classification task, we report $\KFh$ averaged over 25 runs of randomly chosen class pairings.

\section{Datasets, Models and Training}
\label{sec:data}

\paragraph{Medical Image Datasets.}
We conducted our experiments on seven public medical image (radiology) datasets from diverse modalities and anatomies for different binary classification tasks. These are (1) brain MRI glioma detection (\textbf{BraTS}, \citet{menze2014multimodal}); (2) breast MRI cancer detection (\textbf{DBC}, \citet{saha2018machinedukedbc}); (3) prostate MRI cancer risk scoring (\textbf{Prostate MRI}, \citet{sonn2013prostate}); (4) brain CT hemorrhage detection (\textbf{RSNA-IH-CT}, \citet{flanders2020rsnaihct}); (5) chest X-ray pleural effusion detection (\textbf{CheXpert}, \citet{irvin2019chexpert}); (6) musculoskeletal X-ray abnormality detection (\textbf{MURA}, \citet{rajpurkar2017mura}); and (7) knee X-ray osteoarthritis detection (\textbf{OAI}, \citet{Tiulpin2018}). All dataset preparation and task definition details are provided in Appendix G.

\paragraph{Natural Image Datasets.}
We also perform our experiments using four common ``natural'' image classification datasets: \textbf{ImageNet} \citep{deng2009imagenet}, \textbf{CIFAR10} \citep{krizhevsky2009cifar}, \textbf{SVHN} \citep{netzer2011svhn}, and \textbf{MNIST} \citep{deng2012mnist}.

For each dataset, we create training sets of size $N\in\{500, 750, 1000, 1250, 1500, 1750\}$, along with a test set of $750$ examples. These splits are randomly sampled with even class-balancing from their respective base datasets. For the natural image datasets we choose two random classes (different for each experiment) to define the binary classification task, and all results are averaged over five runs using different class pairs.\footnote{$N=1750$ is the upper limit of $N$ that all datasets could satisfy, given the smaller size of medical image datasets and ImageNet's typical example count per class. In Appendix C.4 we evaluate much higher $N$ for datasets that allow for it.} Images are resized to $224\times224$ and normalized linearly to $[0, 1]$.

\begin{figure}[!htbp]
\centering
\includegraphics[width=0.54\textwidth]{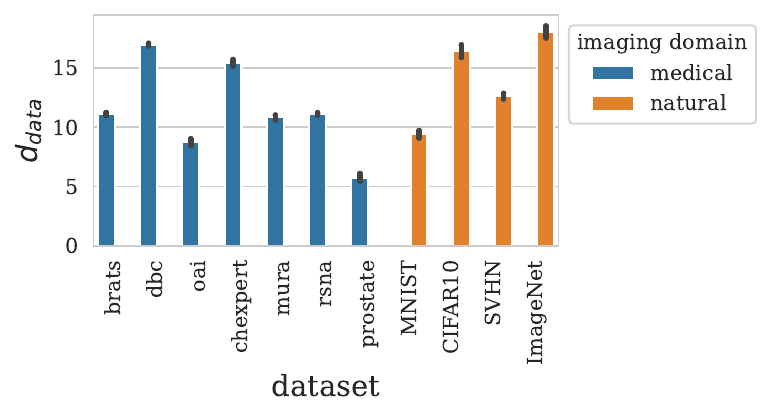}
\includegraphics[width=0.41\textwidth]{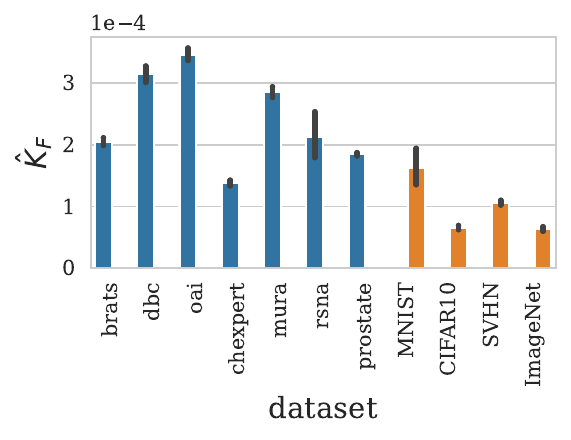}
    \caption{Measured intrinsic dimension ($\dd$, \textbf{left}) and label sharpnesses ($\KFh$, \textbf{right}) of the natural (orange) and medical (blue) image datasets which we analyze (Sec. \ref{sec:data}). \textbf{$\KFh$ is typically higher for the medical datasets}. 
    $\dd$ values are averaged over all training set sizes, and $\KFh$ over all class pairings (Sec. \ref{sec:KFest}); error bars indicate $95\%$ confidence intervals.}
    \label{fig:kf_vs_dd}
\end{figure}

\paragraph{Models and training.} We evaluate six models total: ResNet-18, -34 and -50 \citep{he2016resnet}, and VGG-13, -16 and -19 \citep{VGGs}. Each model $f$ is trained on each dataset for its respective binary classification task with Adam \citep{adam} until the model fully fits to the training set, for each training set size $N$ described previously.
We provide all training and implementation details in Appendix F, and our code can be found at \url{https://github.com/mazurowski-lab/intrinsic-properties}.

\section{The Relationship of Generalization with Dataset Intrinsic Dimension and Label Sharpness}

In Fig. \ref{fig:kf_vs_dd} we show the average measured intrinsic dimension $\dd$ and label sharpness $\KFh$ of each dataset we study.
While both natural and medical datasets can range in $\dd$, we note that medical datasets typically have much higher $\KFh$ than natural image datasets, which we will propose may explain differences in generalization ability scaling rates between the two imaging domains. 
We emphasize that $\dd$ and $\KF$ are model-independent properties of a dataset itself.
We will now describe how network generalization ability scales with $\dd$ and $\KF$.

\subsection{Bounding generalization ability with dataset intrinsic dimension}
\label{sec:exp:ddata_scaling}

A result which we will use throughout is that on average, given some $N$ datapoints sampled i.i.d. from a $d$-dimensional manifold, the distance between the nearest neighbor $\xh$ of some datapoint $x$ scales as $\E_{x}||x-\hat{x}|| = \mathcal{O}(N^{-1/\dd})$ \citep{levina2004maximum}. As such, the nearest-neighbor distance of some test point to the training set decreases as the training set grows larger by $\mathcal{O}(N^{-1/\dd})$. It can then be shown that the loss on the test set/generalization error scales as $\mathcal{O}(K_L\max (K_f, \KF)N^{-1/\dd})$ on average; this is summarized in the following theorem.

\begin{theorem}[Generalization Error and Dataset Intrinsic Dim. Scaling Law \citep{bahri2021explaining}]
  \label{thm:scaling_dd}
  Let $L$, $f$ and $\mathcal{F}$ be Lipschitz on $\Md$ with respective constants $K_L$, $K_f$ and $\KF$. Further let $\Dtr$ be a training set of size $N$ sampled i.i.d. from $\Md$, with $f(x)=\mathcal{F}(x)$ for all $x \in \Dtr$. Then, $L=\mathcal{O}(K_L\max (K_f, \KF)N^{-1/\dd})$.
\end{theorem}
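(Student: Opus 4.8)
The plan is to establish a pointwise bound on the loss at an arbitrary test point $x\in\Md$ in terms of its distance to the nearest training point, and then average over the test set using the nearest-neighbor distance scaling $\E_x\|x-\xh\|=\mathcal{O}(N^{-1/\dd})$ quoted just before the theorem. First I would fix a test point $x$ and let $\xh\in\Dtr$ denote its nearest neighbor in the training set. The crucial leverage is the interpolation hypothesis: since $f(\xh)=\mathcal{F}(\xh)$, the prediction and the ground-truth label agree exactly at $\xh$, so $\xh$ serves as an anchor from which to propagate error out to $x$.

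The key step is to control the prediction discrepancy $|f(x)-\mathcal{F}(x)|$. I would insert the anchor and apply the triangle inequality,
\begin{equation*}
|f(x)-\mathcal{F}(x)| = |f(x)-f(\xh) + \mathcal{F}(\xh) - \mathcal{F}(x)| \le |f(x)-f(\xh)| + |\mathcal{F}(x)-\mathcal{F}(\xh)|,
\end{equation*}
where $f(\xh)=\mathcal{F}(\xh)$ is used to cancel the anchor terms. Invoking the Lipschitz constants $K_f$ and $\KF$ of $f$ and $\mathcal{F}$ on $\Md$, the two summands are bounded by $K_f\|x-\xh\|$ and $\KF\|x-\xh\|$ respectively, giving $|f(x)-\mathcal{F}(x)| \le (K_f+\KF)\|x-\xh\|$. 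Since $K_f+\KF\le 2\max(K_f,\KF)$, this is $\mathcal{O}(\max(K_f,\KF)\|x-\xh\|)$, which already accounts for the $\max(K_f,\KF)$ appearing in the statement. Next, because $L$ vanishes whenever $f(x)=\mathcal{F}(x)$ and is $K_L$-Lipschitz, the pointwise loss inherits the bound $L(x)\le K_L|f(x)-\mathcal{F}(x)| \le K_L(K_f+\KF)\|x-\xh\|$.

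Finally I would take the expectation over test points: since the pointwise bound is linear in $\|x-\xh\|$, it passes directly through the expectation, and substituting $\E_x\|x-\xh\|=\mathcal{O}(N^{-1/\dd})$ yields $L=\E_x[L(x)]=\mathcal{O}(K_L\max(K_f,\KF)N^{-1/\dd})$, absorbing the factor of two into the big-$\mathcal{O}$. The main obstacle I anticipate is making the loss-Lipschitz step rigorous: I must interpret ``$L$ is Lipschitz with constant $K_L$'' as Lipschitzness in the prediction-versus-target discrepancy, combined with $L=0$ at zero discrepancy, so that $L(x)\le K_L|f(x)-\mathcal{F}(x)|$ genuinely holds — this is where the precise formalization of the loss matters most. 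A secondary subtlety is that the quoted nearest-neighbor scaling is an expected statement, so the conclusion is itself an average-case bound over the test distribution rather than a uniform one; I would state the result accordingly and ensure $\xh$ truly lies in $\Dtr$ so that the interpolation assumption can be invoked.
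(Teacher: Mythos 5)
Your proposal is correct and follows essentially the same argument the paper relies on: the paper cites this result from Bahri et al.\ without reproducing the proof, but its own proof of the analogous representation-space law (Theorem \ref{thm:scaling_dr}, Appendix A.3) uses exactly your template --- bound the pointwise loss by $K_L|f(x)-\mathcal{F}(x)|$ via Lipschitzness of the loss and its vanishing at the true label, insert the nearest training neighbor $\xh$ as an anchor where $f(\xh)=\mathcal{F}(\xh)$, apply the triangle inequality and the Lipschitz constants to get a bound proportional to $\|x-\xh\|$, and then take the expectation using $\E_x\|x-\xh\|=\mathcal{O}(N^{-1/\dd})$. Your handling of the two subtleties you flag (the meaning of $K_L$-Lipschitzness of the loss, and the average-case nature of the bound) matches the paper's treatment.
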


We note that the $\KF$ term is typically treated as an unknown constant in the literature \citep{bahri2021explaining}; instead, we propose to \textit{estimate} it with the empirical label sharpness $\KFh$ (Sec. \ref{sec:KFest}). We will next show that $K_f\simeq \KF$ for large $N$ (common for deep models), which allows us to approximate Theorem \ref{thm:scaling_dd} as $L \simeq \mathcal{O}(K_L\KF N^{-1/\dd})$, \textbf{a scaling law independent of the trained model} $f$. Intuitively, this means that the Lipschitz smoothness of $f$ molds to the smoothness of the label distribution as the training set grows larger and test points typically become closer to training points. 

\begin{theorem}[Approximating $K_f$ with $K_\mathcal{F}$]
  \label{thm:KfsimKF}
  $K_f$ converges to $\KF$ in probability as $N\rightarrow\infty$.
\end{theorem}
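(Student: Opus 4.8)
The plan is to recast both Lipschitz constants as \emph{inverse margins} and then show these margins converge. Since $\mathcal{F}$ and $f$ are both $\{0,1\}$-valued, the Lipschitz inequality $|g(x_1)-g(x_2)|\le K_g\|x_1-x_2\|$ is nontrivial only when the two labels differ, in which case it reads $\|x_1-x_2\|\ge 1/K_g$. Taking $K_g$ to be the smallest valid constant (as the paper does), this gives the identity $K_g = 1/m_g$, where $m_g := \inf\{\|x_1-x_2\| : x_1,x_2\in\Md,\ g(x_1)\neq g(x_2)\}$ is the minimum distance on $\Md$ between oppositely-labeled points. Thus it suffices to prove that the model margin $m_f$ converges in probability to the true margin $m_{\mathcal{F}}$ as $N\to\infty$. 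I would control both directions using the nearest-neighbor fact quoted just before Theorem~\ref{thm:scaling_dd}: $\E_x\|x-\xh\|=\mathcal{O}(N^{-1/\dd})$, so that (via Markov's inequality and a covering argument over $\Md$) every point of $\Md$ has a training point within distance $\epsilon_N=\mathcal{O}(N^{-1/\dd})$ with high probability.

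For the bound $m_f \le m_{\mathcal{F}}+o_P(1)$ (equivalently $K_f \ge \KF - o_P(1)$), I would take a pair $x_1^\ast,x_2^\ast\in\Md$ nearly achieving the true margin, $\|x_1^\ast-x_2^\ast\|\simeq m_{\mathcal{F}}$ with $\mathcal{F}(x_1^\ast)\neq\mathcal{F}(x_2^\ast)$, and pass to their nearest training points $x_a,x_b\in\Dtr$. For $N$ large enough that $\epsilon_N < m_{\mathcal{F}}$, the points $x_a,x_1^\ast$ are closer than the true margin and hence share a label, so $\mathcal{F}(x_a)=\mathcal{F}(x_1^\ast)\neq\mathcal{F}(x_2^\ast)=\mathcal{F}(x_b)$; since $f$ interpolates the training set, $f(x_a)\neq f(x_b)$, and the triangle inequality gives $m_f\le\|x_a-x_b\|\le m_{\mathcal{F}}+2\epsilon_N$. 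For the reverse bound $m_f\ge m_{\mathcal{F}}-o_P(1)$ (equivalently $K_f\le\KF+o_P(1)$), I would take a pair $x_1,x_2$ achieving the model margin $m_f$ and again pass to nearest training points $x_a,x_b$. As long as $\epsilon_N < m_f$, the Lipschitz bound on $f$ forbids its label from flipping between $x_1$ and $x_a$ (labels are $\{0,1\}$-valued and $\|x_1-x_a\|<m_f$), so $f(x_a)=f(x_1)\neq f(x_2)=f(x_b)$; interpolation then forces $\mathcal{F}(x_a)\neq\mathcal{F}(x_b)$, whence $\|x_a-x_b\|\ge m_{\mathcal{F}}$ and $m_f\ge m_{\mathcal{F}}-2\epsilon_N$. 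Combining the two bounds yields $|m_f-m_{\mathcal{F}}|\le 2\epsilon_N\to 0$ in probability, and continuity of $t\mapsto 1/t$ away from $0$ transfers this to $K_f\to\KF$.

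The main obstacle is a hidden circularity in the reverse bound: I used $\epsilon_N < m_f$ to argue that $f$ cannot flip across a nearest-neighbor gap, but a model with a spurious sharp transition (a thin ``spike'' in a region empty of training data) could make $m_f$ collapse toward $0$, so that the nearest-neighbor distance is no longer below $m_f$ and the argument breaks. Ruling this out is exactly where the ``well-trained''/smoothness assumption on $f$ must do real work: one needs that $f$'s sharpest transition is pinned to a pair of oppositely-labeled \emph{training} points rather than occurring in a data-empty region, which lower-bounds $m_f$ independently and closes the loop. A secondary technical point is upgrading the \emph{average} nearest-neighbor estimate $\E_x\|x-\xh\|=\mathcal{O}(N^{-1/\dd})$ to a \emph{uniform} high-probability statement over the possibly worst-case-located margin-achieving pairs, which I would handle with an $\epsilon_N$-net of $\Md$ together with a union bound.
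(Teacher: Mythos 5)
Your route is genuinely different from the paper's. The paper never touches margins: it takes arbitrary $x_1,x_2\sim\Md$ with nearest training neighbors $\xh_1,\xh_2$, inserts $\pm\mathcal{F}(x_i)$ and $\pm f(\xh_i)$, applies the triangle inequality and interpolation to get $|f(x_1)-f(x_2)|\le \KF\|x_1-x_2\|+(K_f+\KF)(\|x_1-\xh_1\|+\|x_2-\xh_2\|)$, and then takes expectations so that the residual term is $\mathcal{O}(\max(K_f,\KF)N^{-1/\dd})\to 0$; a symmetric argument gives the other direction. That argument works for any real-valued Lipschitz $f$ and $\mathcal{F}$ and never needs a uniform covering statement, only the expected nearest-neighbor rate. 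Your margin identity $K_g=1/m_g$ is correct for the paper's $\{0,1\}$-valued formalism (it is literally how $\KFh$ in Eq.~\eqref{eq:KFest} is computed), and it buys a more concrete, geometric picture: convergence of $K_f$ to $\KF$ becomes convergence of decision-boundary separations. The price is that you must upgrade the average nearest-neighbor rate to a uniform one over $\Md$ (your $\epsilon_N$-net step), which the paper's in-expectation argument sidesteps, and your argument does not generalize beyond binary outputs.

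The circularity you flag in the reverse bound is a genuine gap in your write-up --- you need $\epsilon_N<m_f$ to conclude $f$ does not flip across a nearest-neighbor gap, but $m_f=1/K_f$ is exactly the quantity you are trying to lower-bound, and nothing stated rules out $K_f\to\infty$ with $N$. You should know, however, that the paper's proof carries the same debt in disguise: its conclusion requires $\max(K_f,\KF)N^{-1/\dd}\to 0$, i.e., it implicitly treats $K_f$ as bounded (or at least $o(N^{1/\dd})$) as $N$ grows, which is the same ``no spurious sharp transition in a data-empty region'' assumption you identify. So your proposal is not weaker than the paper's proof on this point; it just makes the needed hypothesis visible. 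If you add an explicit assumption that $K_f$ remains bounded (or that $m_f=\Omega(N^{-1/\dd+\delta})$), both directions of your argument close and the result follows; without some such assumption, neither your proof nor the paper's is fully rigorous. One further small caveat: your forward bound needs both classes to appear in $\Dtr$ and needs the (near-)margin-achieving pair to exist on $\Md$, both of which hold under the paper's class-balanced sampling but are worth stating.
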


We show the full proof in Appendix A.2 due to space constraints.
This result is also desirable because computing an estimate for $K_f$, the Lipschitz constant of the model $f$, either using Eq. \eqref{eq:KFest} or with other techniques \citep{fazlyab2019efficient}, depends on the choice of model $f$, and may require many forward passes. Estimating $\KF$ (Eq. \eqref{eq:KFest} is far more tractable, as it is an intrinsic property of the dataset itself which is relatively fast to compute.



Next, note that the Lipschitz constant $K_L$ is a property of the loss function, which we take as fixed \textit{a priori}, and so does not vary between datasets or models. As such, $K_L$ can be factored out of the scaling law of interest, such that we can simply consider $L\simeq \mathcal{O}(\KF N^{-1/\dd})$, \ie, 
\begin{equation}
\label{eq:loss_scaling_dd}
\log L \lesssim -\frac{1}{\dd}\log N + \log \KF + a
\end{equation}
for some constant $a$.
In the following section, we will demonstrate how the prediction of Eq. \eqref{eq:loss_scaling_dd} may explain recent empirical results in the literature where the rate of this generalization scaling law differed drastically between natural and medical datasets, via the measured differences in the typical label sharpness $\KFh$ of datasets in these two domains.

\subsection{Generalization Discrepancies Between Imaging Domains}
\label{sec:theory_lipschitz}




Consider the result from Eq. \eqref{eq:loss_scaling_dd} that the test loss/generalization error scales approximately as $L\propto \KF N^{-1/\dd}$ on average.
From this, we hypothesize that a higher label sharpness $\KF$ will result in the test loss curve that grows faster with respect to $\dd$.

In Fig. \ref{fig:loss_vs_datadim} we evaluate the generalization error (log test loss) scaling of all models trained on each natural and medical image dataset with respect to the training set intrinsic dimension $\dd$, for all evaluated training set sizes $N$. 
We also show the scaling of test \textit{accuracy} in Appendix E.1.

\begin{figure}[!htbp]
\centering
\includegraphics[width=0.98\textwidth]{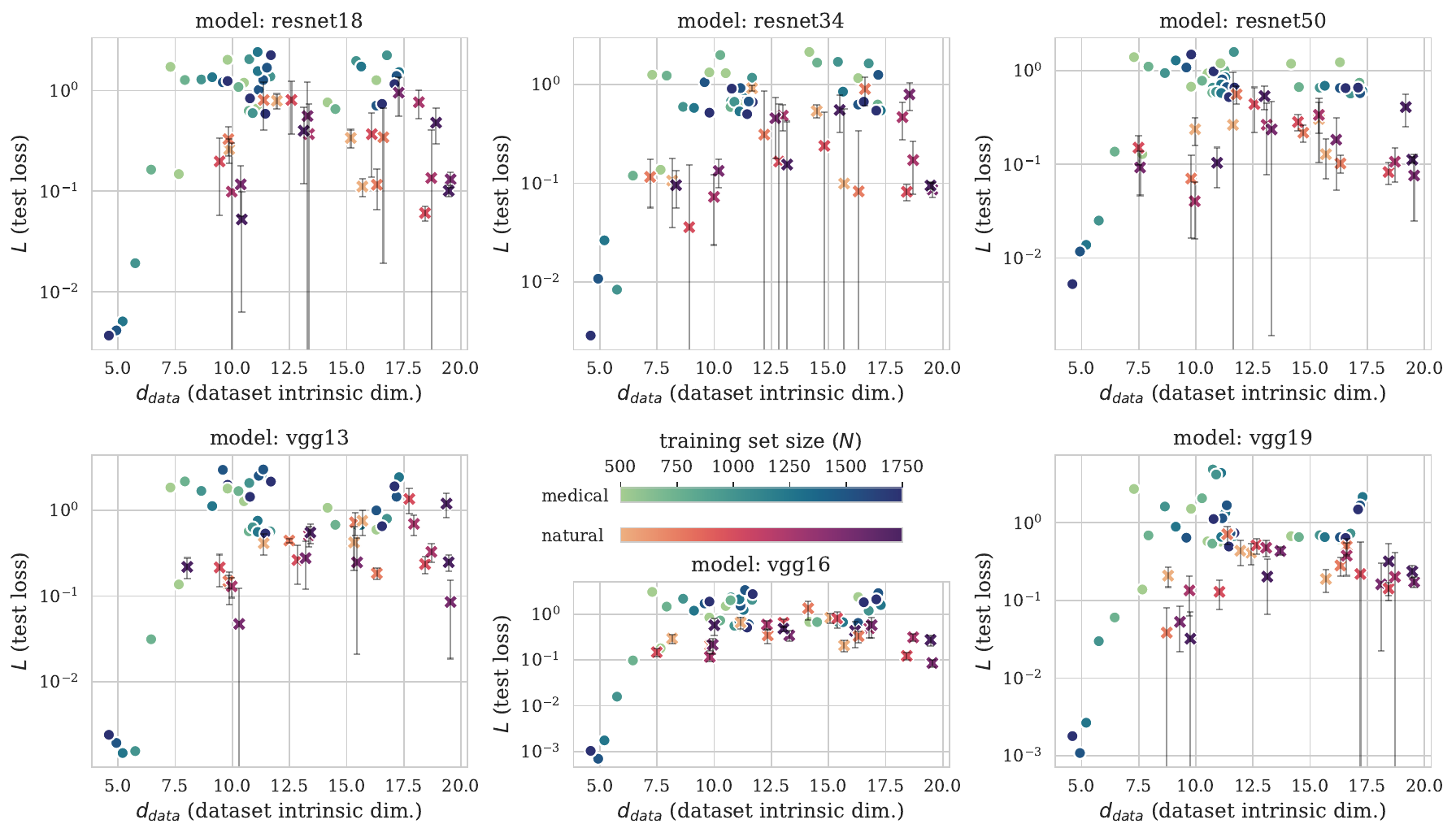}
\caption{Scaling of log test set loss/generalization ability with training dataset intrinsic dimension ($\dd$) for natural and medical datasets. Each point corresponds to a (model, dataset, training set size) triplet. Medical dataset results are shown in blue shades, and natural dataset results are shown in red; note the difference in generalization error scaling rate between the two imaging domains. Standard deviation error bars are shown for natural image datasets for 5 different class pairs.
}
\label{fig:loss_vs_datadim}
\end{figure}

We see that \textit{within} an imaging domain (natural or medical), model generalization error typically increases with $\dd$, as predicted, similar to prior results \citep{pope2021intrinsic,konz2022intrinsic}; in particular, approximately $\log L\propto -1/\dd + \mathrm{const.}$, aligning with Eq. \eqref{eq:loss_scaling_dd}. 
However, we also see that the generalization error scaling is much sharper for models trained on medical data than natural data; models trained on datasets with similar $\dd$ and of the same size $N$ tend to perform much worse if the data is medical images. A similarly large gap appears for the scaling of test accuracy (Appendix E.1). We posit that this difference is explained by medical datasets typically having much higher label sharpness ($\KFh \sim2.5\times 10^{-4}$) than natural images ($\KFh\sim 1\times 10^{-4}$) (Fig. \ref{fig:kf_vs_dd}) 
, as $\KF$ is the only term in  Eq. \eqref{eq:loss_scaling_dd} that differs between two models with the same training set intrinsic dimension $\dd$ and size $N$. Moreover, in Appendix C.1 we show that accounting for $\KF$ increases the likelihood of the posited scaling law given the observed generalization data. However, we note that there could certainly be other factors causing the discrepancy which are not accounted for. 

Intuitively, the difference in dataset label sharpness $\KF$ between these imaging domains is reasonable, as $\KF$ describes how similar a dataset's images can be while still having different labels (Sec. \ref{sec:KFest}).
For natural image classification, images from different classes are typically quite visually distinct. However, in many medical imaging tasks, a change in class can be due to a small change or abnormality in the image, resulting in a higher dataset $\KF$; for example, the presence of a small breast tumor will change the label of a breast MRI from healthy to cancer. 





\section{Adversarial Robustness and Training Set Label Sharpness}
\label{sec:exp:robust}

In this section we present another advantage of obtaining the sharpness of the dataset label distribution ($\KF$): it is negatively correlated with the adversarial robustness of a neural network. Given some test point $x_0\in\Md$ with true label $y=\mathcal{F}(x_0)$, the general goal of an adversarial attack is to find some $\tilde{x}$ that appears similar to $x_0$ --- \ie, $||\tilde{x}-x_0||_\infty$ is small --- that results in a different, seemingly erroneous network prediction for $\tilde{x}$. Formally, the \textit{robustness radius} of the trained network $f$ at $x_0$ is defined by
\begin{equation}
    \label{eq:robustradius}
    R(f,x_0) := \mathop{\mathrm{inf}}_{\tilde{x}} \left\{ ||\tilde{x}-x_0||_\infty : f(\tilde{x}) \neq y \right\},
\end{equation}
where $x_0\in\Md$ \citep{zhang2021towards}.
This describes the largest region around $x_0$ where $f$ is robust to adversarial attacks. We define the \textit{expected robust radius} of $f$ as $\hat{R}(f) := \E_{x_0\sim\Md}\displaystyle R(f,x_0)$.

\begin{figure}[!htbp]
\centering
\includegraphics[width=0.7\textwidth]{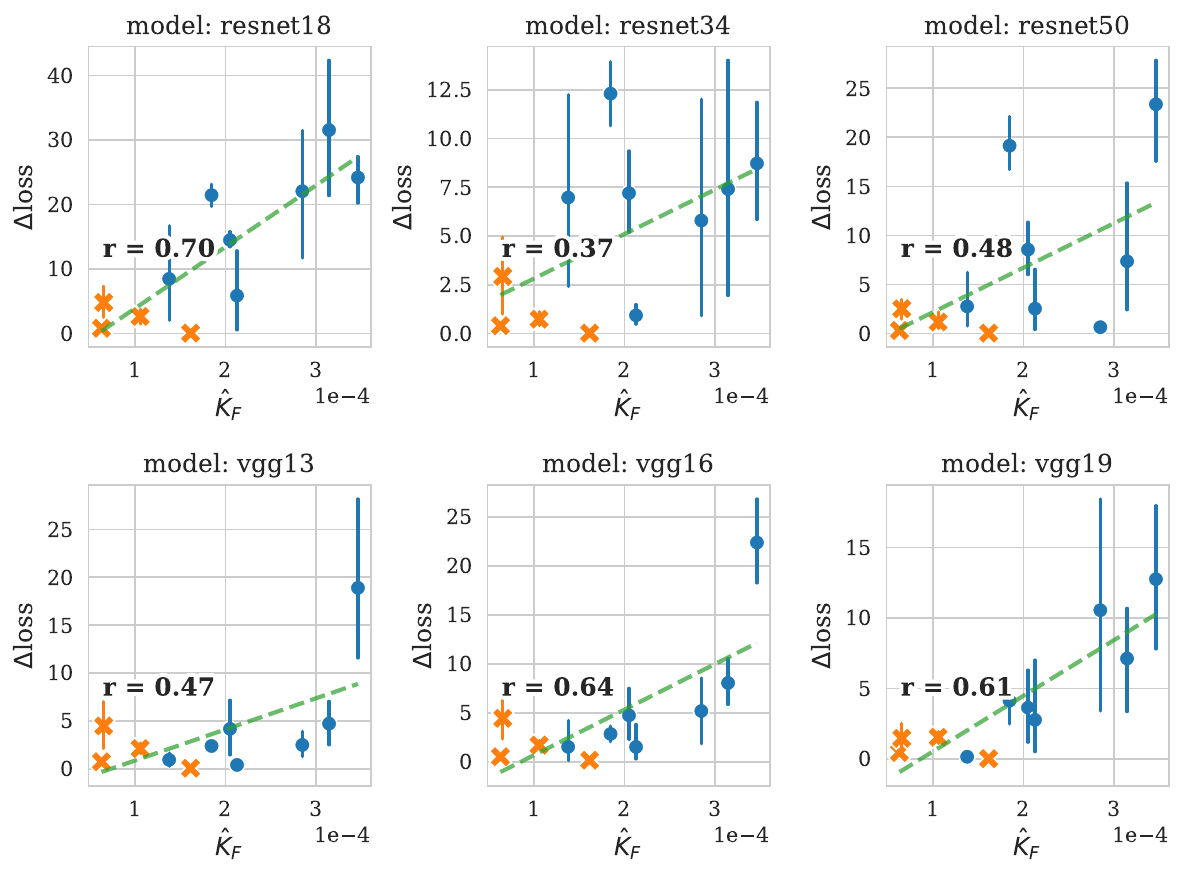}
    \caption{Test set loss penalty due to FGSM adversarial attack vs. measured dataset label sharpness ($\KFh$) for models trained on natural and medical image datasets (orange and blue points, respectively). Pearson correlation coefficient $r$ also shown. Error bars are $95\%$ confidence intervals over all training set sizes $N$ for the same dataset.}
    \label{fig:atk_vs_kf}
\end{figure}

\begin{theorem}[Adversarial Robustness and Label Sharpness Scaling Law]
  \label{thm:advscaling_KF}
  Let $f$ be $K_f$-Lipschitz on $\R^n$. For a sufficiently large training set, the lower bound for the expected robustness radius of $f$ scales as $\hat{R}(f)\simeq\Omega\left({1}/{\KF}\right)$.

\end{theorem}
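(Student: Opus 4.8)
The plan is to tie the expected robustness radius directly to the Lipschitz constant $K_f$ of the trained classifier, and then invoke Theorem \ref{thm:KfsimKF} to re-express the resulting bound in terms of the dataset's intrinsic label sharpness $\KF$. First I would fix a correctly-classified test point $x_0\in\Md$, so that $f(x_0)=y=\mathcal{F}(x_0)$, and consider any candidate adversarial example $\tilde{x}$ achieving $f(\tilde{x})\neq y$. Since $f$ takes values in $\{0,1\}$, a change of prediction forces $|f(\tilde{x})-f(x_0)|=1$, and the $K_f$-Lipschitz assumption on $\R^n$ then gives $1=|f(\tilde{x})-f(x_0)|\leq K_f\,\|\tilde{x}-x_0\|_2$. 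This immediately yields the $L_2$ separation $\|\tilde{x}-x_0\|_2\geq 1/K_f$ between $x_0$ and any point the attacker can reach, which is really just the statement that a finite discrete Lipschitz constant forces differently-labeled points to be bounded away from one another.

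Second, I would translate this $L_2$ lower bound into the $L_\infty$ geometry used in the definition of $R(f,x_0)$ (Eq. \eqref{eq:robustradius}) via the standard norm equivalence $\|v\|_2\leq\sqrt{n}\,\|v\|_\infty$, obtaining $\|\tilde{x}-x_0\|_\infty\geq 1/(\sqrt{n}\,K_f)$. Taking the infimum over all valid $\tilde{x}$ gives the pointwise bound $R(f,x_0)\geq 1/(\sqrt{n}\,K_f)$, and taking the expectation over $x_0\sim\Md$ preserves it, so $\hat{R}(f)\geq 1/(\sqrt{n}\,K_f)$ up to the classification accuracy (on misclassified points $R=0$, but for a well-trained, well-generalizing $f$ this contributes only a constant factor close to $1$). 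Because the ambient dimension $n$ is fixed for a given dataset, the factor $1/\sqrt{n}$ is absorbed into the asymptotic constant and does not affect the scaling rate.

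Finally, to obtain a model-independent bound in terms of $\KF$ rather than the model-dependent $K_f$, I would apply Theorem \ref{thm:KfsimKF}: for a sufficiently large training set, $K_f$ converges in probability to $\KF$, so $1/(\sqrt{n}\,K_f)\simeq 1/(\sqrt{n}\,\KF)$, giving $\hat{R}(f)\simeq\Omega(1/\KF)$ as claimed.

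I expect the main obstacle to be reconciling the hypothesis that $f$ is Lipschitz on all of $\R^n$ --- which is exactly what makes the argument valid for \emph{off-manifold} adversarial perturbations $\tilde{x}$ --- with Theorem \ref{thm:KfsimKF}, whose convergence $K_f\to\KF$ is established for the Lipschitz constant restricted to the data manifold $\Md$, consistent with the footnote in Section \ref{sec:preliminaries} that explicitly excludes adversarial images from the manifold Lipschitz assumptions. Closing this gap requires either arguing that the ambient and manifold Lipschitz constants of a well-trained interpolating network agree in the relevant large-$N$ regime, or restricting the attack analysis so that the operative constant is precisely the one governed by Theorem \ref{thm:KfsimKF}. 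I would make this identification explicit rather than leave it implicit, since the entire $\Omega(1/\KF)$ conclusion hinges on the substitution of $\KF$ for $K_f$; everything else in the argument is elementary norm bookkeeping.
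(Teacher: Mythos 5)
Your argument reaches the same conclusion as the paper but by a genuinely different route. The paper's proof is a one-line application of Proposition~1 of \citet{tsuzuku2018lipschitz}, which gives the margin-based pointwise bound $R(f,x_0)\geq M_{f,x_0}/(\sqrt{2}K_f)$ with $M_{f,x_0}$ the prediction margin; taking the expectation over $x_0\sim\Md$ and invoking Theorem~\ref{thm:KfsimKF} yields $\Omega(1/K_f)\simeq\Omega(1/\KF)$. You instead exploit the paper's literal formalism $f:\R^n\to\{0,1\}$: a prediction flip forces $|f(\tilde{x})-f(x_0)|=1$, so Lipschitzness gives $\|\tilde{x}-x_0\|_2\geq 1/K_f$ directly, and the norm equivalence $\|v\|_2\leq\sqrt{n}\|v\|_\infty$ converts this to the $L_\infty$ radius of Eq.~\eqref{eq:robustradius}. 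Your version is more elementary and self-contained, and it is actually more careful than the paper on one point the paper silently elides --- the $L_2$-to-$L_\infty$ conversion (and the resulting dimension-dependent constant $1/\sqrt{n}$, harmlessly absorbed since $n$ is fixed). What the margin-based route buys in exchange is compatibility with continuous-valued (logit/sigmoid) outputs, which is what trained networks actually produce and what the paper itself uses elsewhere (e.g., the proof of Theorem~\ref{thm:scaling_dr} decomposes $f$ with a sigmoid head); under your discrete-valued reading, ``$f$ is $K_f$-Lipschitz'' already forces the decision regions to be separated by $1/K_f$, so the pointwise bound is nearly definitional. Both proofs share the same hand-wave on misclassified points (where $R=0$, resp.\ $M_{f,x_0}\leq 0$), and your treatment of that is no worse than the paper's.

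The obstacle you flag at the end --- that the theorem hypothesizes Lipschitzness on all of $\R^n$ while Theorem~\ref{thm:KfsimKF} only controls the Lipschitz constant restricted to $\Md$, so that the ambient constant satisfies $K_f^{\R^n}\geq K_f^{\Md}\simeq\KF$ and the substitution $1/K_f\simeq 1/\KF$ runs in the wrong direction for a lower bound --- is real, and you are right to insist it be made explicit. But note that it is not a defect introduced by your approach: the paper's own proof makes exactly the same substitution with exactly the same gap, and the footnote in Section~\ref{sec:preliminaries} excluding adversarial perturbations from the manifold Lipschitz assumptions applies equally to both arguments. So your proposal is on the same footing as the published proof, with the additional merit of naming the weak link.
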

\begin{proof}
This follows from Prop. 1 of \citet{tsuzuku2018lipschitz} --- see Appendix A.4 for all details.
\end{proof}

While it is very difficult to estimate robustness radii of neural networks in practice \citep{katz2017reluplex}, we can instead measure the average 
loss penalty of $f$ due to attack, $\E_{x_0\sim\Dts}(L(\tilde{x}) - L(x_0))$,
over a test set $\Dts$ of points sampled from $\Md$, and see if it correlates negatively with $\KFh$ (Eq. \eqref{eq:KFest}) for different models and datasets. As the expected robustness radius decreases, so should the loss penalty become steeper. We use FGSM \citep{DBLP:journals/corr/GoodfellowSS14} attacks with $L_\infty$ budgets of $\epsilon\in \{1/255, 2/255, 4/225, 8/255\}$ to obtain $\tilde{x}$.


In Fig. \ref{fig:atk_vs_kf} we plot the test loss penalty with respect to $\KFh$ for all models and training set sizes for $\epsilon=2/255$, and show the Pearson correlation $r$ between these quantities for each model, for all $\epsilon$, in Table \ref{tab:adv} (per-domain correlations are provided in Appendix E.3). (We provide the plots for the other $\epsilon$ values, as well as for the test \textit{accuracy} penalty, in Appendix E.3). Here we average results over the different training set sizes $N$ due to the lack of dependence of Theorem \ref{thm:advscaling_KF} on $N$.

\begin{wraptable}[10]{r}{0.6\textwidth}
\fontsize{9pt}{9pt}\selectfont
\centering
\begin{tabular}{l||ccc|ccc}
\hline
    Atk. $\epsilon$ & RN-18 & RN-34 & RN-50 & V-13 & V-16 & V-19 \\
    \hline
    $1/255$ & $0.77$ & $0.48$ & $0.55$ & $0.47$ & $0.63$ & $0.61$ \\
    $2/255$ & $0.70$ & $0.37$ & $0.48$ & $0.47$ & $0.64$ & $0.61$ \\
    $4/255$ & $0.63$ & $0.26$ & $0.41$ & $0.45$ & $0.62$ & $0.6$ \\
    $8/255$ & $0.54$ & $0.18$ & $0.34$ & $0.39$ & $0.58$ & $0.57$ \\
    \hline
    \end{tabular}
\caption{Pearson correlation $r$ between test loss penalty due to FGSM attack and dataset label sharpness $\KFh$, over all datasets and all training sizes. ``RN'' = ResNet, ``V'' = VGG.}
\label{tab:adv}
\end{wraptable}

As expected, the loss penalty is typically worse for models trained on datasets with higher $\KF$, implying a smaller expected robustness radius.
We see that medical datasets, which typically have higher $\KF$ than natural datasets (Fig. \ref{fig:kf_vs_dd}), are indeed typically more susceptible to attack, as was found in \citet{ma2021understanding}. In Appendix D.1 we show example clean and attacked images for each medical image dataset for $\epsilon=2/255$. A clinical practitioner may not notice any difference between the clean and attacked images upon first look,\footnote{That being said, the precise physical interpretation of intensity values in certain medical imaging modalities, such as Hounsfield units for CT, may reveal the attack upon close inspection.} yet the attack makes model predictions completely unreliable. This indicates that adversarially-robust models may be needed for medical image analysis scenarios where potential attacks may be a concern.

\section{Connecting Representation Intrinsic Dimension to Dataset Intrinsic Dimension and Generalization}
\label{sec:exp:drepr_scaling}
The scaling of network generalization ability with dataset intrinsic dimension $\dd$ (Sec. \ref{sec:exp:ddata_scaling}) motivates us to study the same behavior in the space of the network's learned hidden representations for the dataset. In particular, we follow \citep{ansuini2019intrinsic,gong2019intrinsic} and assume 
that an encoder in a neural network maps input images to some $\dr$-dimensional manifold of \textit{representations} (for a given layer), with $\dr \ll n$. 
As in the empirical work of \citet{ansuini2019intrinsic}, we consider the intrinsic dimensionality of the representations of the final hidden layer of $f$.
Recall that the test loss can be bounded above as $L = \mathcal{O} (K_L\max (K_f, \KF)N^{-1/\dd})$ (Thm. \ref{thm:scaling_dd}). A similar analysis can be used to derive a loss scaling law for $\dr$, as follows.



\begin{theorem}[Generalization Error and Learned Representation Intrinsic Dimension Scaling Law]
  \label{thm:scaling_dr}
  $L\simeq\mathcal{O}(K_LN^{-1/\dr})$, where $K_L$ is the Lipschitz constant for $L$.
\end{theorem}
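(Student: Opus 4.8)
The plan is to replay the proof of Theorem \ref{thm:scaling_dd}, but now in the space of learned representations rather than in input space. Write $f = g \circ h$, where the encoder $h:\R^n\to\R^{\dr}$ sends each input to its final-hidden-layer representation $z = h(x)$, assumed to lie on a $\dr$-dimensional manifold $\mathcal{M}_{\dr}$, and $g$ is the remaining classification head. Since the training inputs are sampled i.i.d.\ from $\Md$ and $h$ is deterministic, the training representations $\{h(x_i)\}_{i=1}^N$ are $N$ i.i.d.\ samples from $\mathcal{M}_{\dr}$. This lets me reuse the nearest-neighbor distance estimate of \citet{levina2004maximum} directly in representation space: for a test representation $z$ with nearest training representation $\zz$, the expected distance scales as $\E_z\|z - \zz\| = \mathcal{O}(N^{-1/\dr})$.

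The second ingredient is interpolation. Because $f(x_i) = \mathcal{F}(x_i)$ for every training point, the loss vanishes at each training representation, so $L(\zz) = 0$. Viewing the loss as a function on $\mathcal{M}_{\dr}$ that is Lipschitz with constant $K_L$ (mirroring the assumption placed on $L$ over $\Md$ in Section \ref{sec:preliminaries}), I would then bound, for a test representation,
$$L(z) = L(z) - L(\zz) \le K_L\,\|z - \zz\|,$$
and take the expectation over test points to conclude $L = \E_z L(z) \le K_L\,\E_z\|z-\zz\| = \mathcal{O}(K_L N^{-1/\dr})$, which is the claim.

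The step I expect to be the main obstacle is justifying why only $K_L$ appears, with no analogue of the $\max(K_f,\KF)$ factor from Theorem \ref{thm:scaling_dd}. In input space that factor arose because the loss depends on the gap $|f(x)-\mathcal{F}(x)|$, and converting an input-distance bound into a bound on that gap costs a factor of $K_f$ from the model and $\KF$ from the labeling function. In representation space the corresponding conversion would instead cost the Lipschitz constant $K_g$ of the head together with the sharpness of the label distribution measured in representation coordinates. To land on a bound involving $K_L$ alone, I would treat the loss as directly Lipschitz on $\mathcal{M}_{\dr}$ --- exactly as the preliminaries assume $L$ is Lipschitz on $\Md$ --- so that $K_g$ and the representation-space label sharpness are absorbed into $K_L$ or into the implied constant of the $\mathcal{O}$. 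I would flag this absorption as the crux on which the cleanliness of the statement rests. A secondary, modeling-level caveat worth stating explicitly is that $\mathcal{M}_{\dr}$ and hence $\dr$ depend on the trained $f$, so transferring the i.i.d.-sampling and nearest-neighbor arguments presumes $h$ pushes the data distribution forward to a genuine $\dr$-dimensional manifold with the regularity required by \citet{levina2004maximum}; I would inherit this as an assumption from \citet{ansuini2019intrinsic, gong2019intrinsic} rather than attempt to establish it.
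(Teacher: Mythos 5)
Your overall strategy matches the paper's: decompose $f$ into an encoder and a head, transfer the nearest-neighbor scaling $\E\|z-\zz\| = \mathcal{O}(N^{-1/\dr})$ to representation space, and use interpolation on the training set to make the loss vanish at the nearest training representation. This is the same skeleton as the paper's Appendix A.3, which writes $f=h\circ g$ with $g$ the encoder, additionally factorizes the labeling function as $\mathcal{F}=\mathcal{H}\circ\mathcal{G}$ with $g=\mathcal{G}$ and $h=\mathcal{H}$ on $\Dtr$, and runs a triangle-inequality argument through $\zz$.

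The one place you diverge is precisely the step you flag as the crux, and there the paper does something more specific than the absorption you propose. Declaring the loss to be $K_L$-Lipschitz directly on the representation manifold silently folds the head's Lipschitz constant (and the representation-space label sharpness) into $K_L$; as you yourself note, this changes what $K_L$ means, so it does not quite prove the theorem as stated, where $K_L$ is the Lipschitz constant of the loss fixed once and for all in Section \ref{sec:preliminaries}. The paper instead keeps these constants explicit: it first derives $L=\mathcal{O}(K_L\max(K_h,K_{\mathcal{H}})N^{-1/\dr})$, where $K_h$ and $K_{\mathcal{H}}$ are the Lipschitz constants of the model's head and of the ``true'' head, then (i) reruns the argument of Theorem \ref{thm:KfsimKF} at the representation level to conclude $K_{\mathcal{H}}\simeq K_h$, and (ii) observes that the head is a sigmoid (or softmax), which is $1$-Lipschitz, so $\max(K_h,K_{\mathcal{H}})\simeq 1$ and drops out. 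That second observation is the missing ingredient in your write-up: the reason no analogue of $\max(K_f,\KF)$ survives is not that it is absorbed into $K_L$, but that for this architecture it equals $1$. With that observation added, your argument closes and coincides with the paper's.
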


We reserve the proof for Appendix A.3 due to length constraints, but the key is to split $f$ into a composition of an encoder and a final layer and analyze the test loss in terms of the encoder's outputted representations.
Similarly to Eq. \eqref{eq:loss_scaling_dd}, $K_L$ is fixed for all experiments, such that we can simplify this result to $ L\simeq\mathcal{O}(N^{-1/\dr})$, \ie, 
\begin{equation}
    \label{eq:dr_logscaling}
    \log L \lesssim -\frac{1}{\dr}\log N + b
\end{equation}
for some constant $b$. This equation is of the same form as the loss scaling law based on the \textit{dataset} intrinsic dimension $\dd$ of Thm. \ref{thm:scaling_dd}. This helps provide theoretical justification for prior empirical results of $L$ increasing with $\dr$ (\cite{ansuini2019intrinsic}, as well as for it being similar in form to the scaling of $L$ with $\dd$ (Fig. \eqref{eq:loss_scaling_dd}).

In Fig. \ref{fig:loss_vs_reprdim} we evaluate the scaling of log test loss with the $\dr$ of the training set (Eq. \eqref{eq:dr_logscaling}), for each model, dataset, and training set size as in Sec. \ref{sec:exp:ddata_scaling}. The estimates of $\dr$ are made using TwoNN on the final hidden layer representations computed from the training set for the given model, as in \citet{ansuini2019intrinsic}. We also show the scaling of test accuracy in Appendix E.1, as well as results from using the MLE estimator to compute $\dr$.

\begin{figure}[!htbp]
\centering
\includegraphics[width=0.98\textwidth]{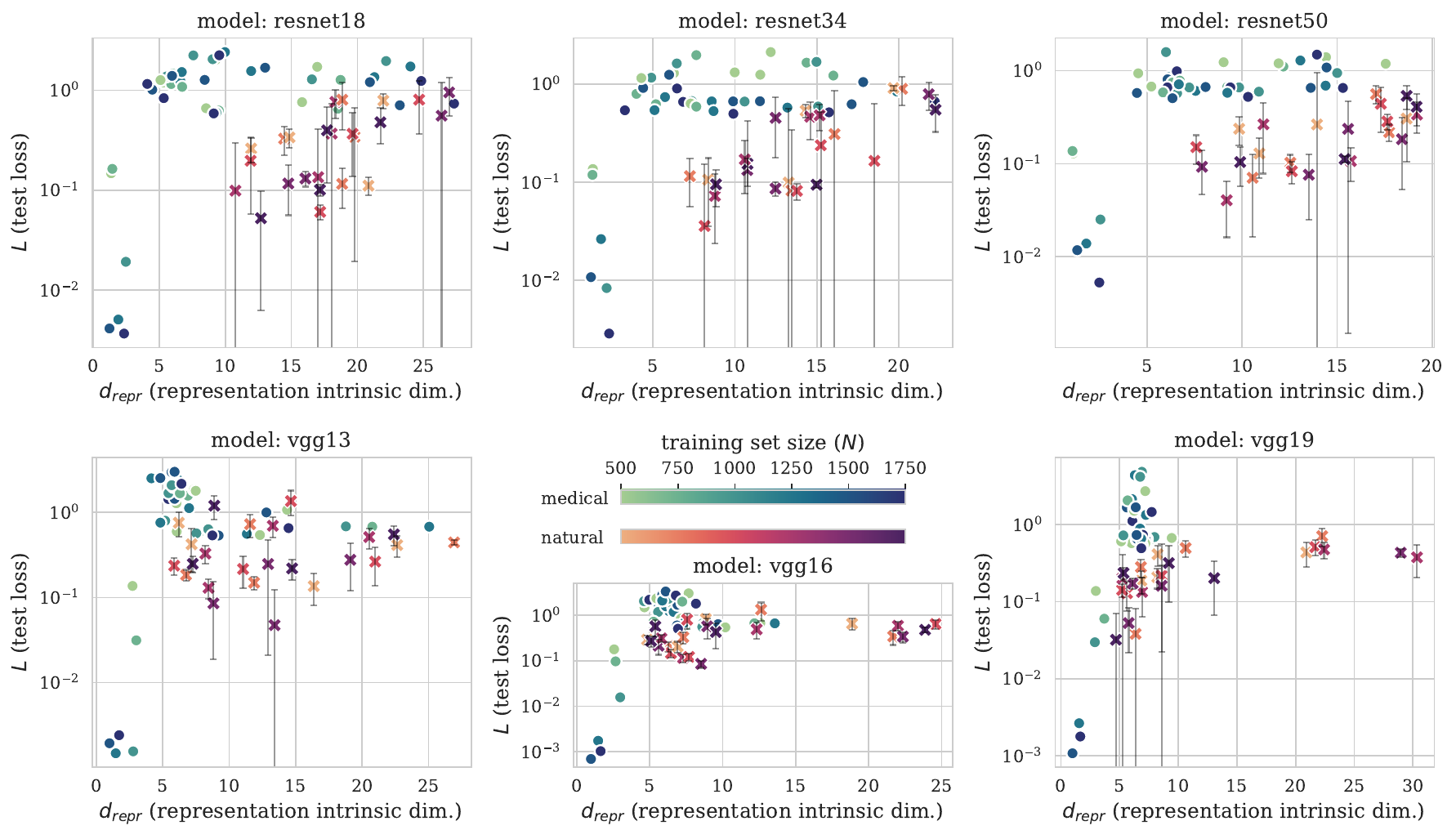}
    \caption{Scaling of log test set loss/generalization ability with the intrinsic dimension of final hidden layer learned representations of the training set ($\dr$), for natural and medical datasets. Each point corresponds to a (model, dataset, training set size) triplet. Medical dataset results are shown in blue shades, and natural dataset results are shown in red.}
    \label{fig:loss_vs_reprdim}
\end{figure}



We see that generalization error typically increases with $\dr$, in a similar shape as the $\dd$ scaling (Fig. \ref{fig:loss_vs_datadim}). The similarity of these curves may be explained by $\dr\lesssim\dd$, or other potential factors unaccounted for. The former arises if the loss bounds of Theorems \ref{thm:scaling_dd} and \ref{thm:scaling_dr} are taken as \textit{estimates}:

\begin{theorem}[Bounding of Representation Intrinsic Dim. with Dataset Intrinsic Dim.]
\label{thm:reprdimbound}
Let Theorems \ref{thm:scaling_dd} and \ref{thm:scaling_dr} be taken as estimates, \ie, $L\approx K_L\max(K_{f},K_{\mathcal{F}})N^{-1/\dd}$ and $L\approx K_LN^{-1/\dr}$. Then, $\dr\lesssim\dd$.
\end{theorem}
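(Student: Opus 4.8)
The plan is to treat both scaling laws as approximate equalities for the \emph{same} quantity $L$ and then eliminate $L$. First I would write the two estimates side by side, $L \approx K_L \max(K_f, \KF) N^{-1/\dd}$ (Thm.~\ref{thm:scaling_dd}) and $L \approx K_L N^{-1/\dr}$ (Thm.~\ref{thm:scaling_dr}), and set them equal. The loss-function constant $K_L$, which is fixed a priori, cancels immediately, leaving the relation $\max(K_f, \KF)\, N^{-1/\dd} \approx N^{-1/\dr}$ between the two intrinsic dimensions.

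Next I would take logarithms of both sides and rearrange to isolate the difference of reciprocal dimensions:
\begin{equation}
\left(\frac{1}{\dd} - \frac{1}{\dr}\right)\log N \approx \log \max(K_f, \KF).
\end{equation}
This reduces the entire claim to a sign analysis: since the training sizes satisfy $N \gg 1$ so that $\log N > 0$, the sign of $1/\dd - 1/\dr$ matches the sign of $\log \max(K_f, \KF)$.

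The key step is then establishing that $\max(K_f, \KF) \le 1$, so that the right-hand side is non-positive. Here I would invoke two facts already available in the paper: by Theorem~\ref{thm:KfsimKF}, $K_f \simeq \KF$ for large $N$, so $\max(K_f, \KF) \approx \KF$; and the measured label sharpness (Fig.~\ref{fig:kf_vs_dd}) satisfies $\KFh \sim 10^{-4} \ll 1$, which follows from the labels lying in $\{0,1\}$ while the pairwise $L_2$ distances between normalized high-dimensional images are comparatively large. With $\max(K_f, \KF) \le 1$ we obtain $\log \max(K_f, \KF) \le 0$, hence $1/\dd - 1/\dr \le 0$, i.e.\ $1/\dd \le 1/\dr$. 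Taking reciprocals (both dimensions positive) flips the inequality to $\dr \le \dd$, giving the claimed $\dr \lesssim \dd$.

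The main obstacle is justifying $\max(K_f, \KF) \le 1$ rigorously rather than only empirically, since this single condition is precisely what fixes the \emph{direction} of the bound: were the relevant Lipschitz constant larger than one, the identical argument would instead yield $\dr \gtrsim \dd$. I would therefore lean on the normalization convention (images in $[0,1]^n$ with large $n$) together with Theorem~\ref{thm:KfsimKF} to argue the condition holds in the regime of interest, while flagging that the conclusion is an order-of-magnitude estimate inherited from the approximate, asymptotic nature of the two scaling laws it combines.
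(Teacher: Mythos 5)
Your proposal is correct and follows the same skeleton as the paper's proof: equate the two estimates, cancel $K_L$, take logarithms, and reduce the claim to the sign of $\log\max(K_f,\KF)$, which yields exactly the paper's case split ($\dr\lesssim\dd$ if $K_f,\KF\le 1$, and $\dr\gtrsim\dd$ otherwise). Where you diverge is in how that case is resolved. You lean entirely on the empirical observation that $\KFh\sim 10^{-4}\ll 1$ (labels in $\{0,1\}$, large pairwise distances between normalized images), together with Theorem~\ref{thm:KfsimKF}, and you correctly flag this as the weak point of the argument. The paper's primary device is different and more structural: for a classification network, the decision boundaries and predictions of a $K$-Lipschitz $f$ coincide with those of its $1$-Lipschitz rescaling $\frac{1}{K}f$ \citep{bethune2022pay}, so the scaling of $L$ versus $\dr$ behaves as if $K_f=1$, and then $\KF\simeq K_f$ collapses the case analysis to $\dr\lesssim\dd$ independently of the measured magnitude of $\KF$. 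The paper only invokes the empirical fact $\KF\ll 1$ as a secondary, ``holds anyway'' justification --- precisely the argument you use as your main one. Your route is acceptable at the paper's level of rigor (the authors themselves endorse it as sufficient), but the rescaling-invariance argument is the answer to the obstacle you identified: it fixes the direction of the bound without needing the Lipschitz constant to be numerically small, so you may want to adopt it as the primary justification and demote the $\KF\ll 1$ observation to corroboration.
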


\begin{proof}
This centers on equating the two scaling laws and using a property of the Lipschitz constant of classification networks-- see Appendix A.5 for the full proof.
\end{proof}

In other words, the intrinsic dimension of the training dataset serves as an upper bound for the intrinsic dimension of the final hidden layer's learned representations. While a rough estimate, we found this to usually be the case in practice, shown in Fig. \ref{fig:dr_vs_dd} for all models, datasets and training sizes. Here, $\dr=\dd$ is shown as a dashed line, and we use the same estimator (MLE, Sec. \ref{sec:dest}) for $\dd$ and $\dr$ for consistency (similar results using TwoNN are shown in Appendix E.2).

Intuitively, we would expect $\dr$ to be bounded by $\dd$, as $\dd$ encapsulates all raw dataset information, while learned representations prioritize task-related information and discard irrelevant details \citep{tishby2015deep}, resulting in $\dr\lesssim\dd$. Future work could investigate how this relationship varies for networks trained on different tasks, including supervised (\eg, segmentation, detection) and self-supervised or unsupervised learning, where $\dr$ might approach $\dd$.

\section*{Discussion and Conclusions}

In this paper, we explored how the generalization ability and adversarial robustness of a neural network relate to the intrinsic properties of its training set, such as intrinsic dimension ($\dd$) and label sharpness ($\KF$). We chose radiological and natural image domains as prominent examples, but our approach was quite general; indeed, in Appendix C.2 we evaluate our hypotheses on a skin lesion image dataset, a domain that shares similarities with both natural images and radiological images, and intriguingly find that properties of the dataset and models trained on it often lie in between these two domains. It would be interesting to study these relationships in still other imaging domains such as satellite imaging \citep{pritt2017satellite}, histopathology \citep{komura2018machine}, and others. Additionally, this analysis could be extended to other tasks (\eg, multi-class classification or semantic segmentation), newer model architectures such as ConvNeXt \citep{liu2022convnet}, non-convolutional models such as MLPs or vision transformers \citep{dosovitskiy2021an}, or even natural language models.

\begin{wrapfigure}[19]{r}{0.4\textwidth}
    \includegraphics[width=0.4\textwidth]{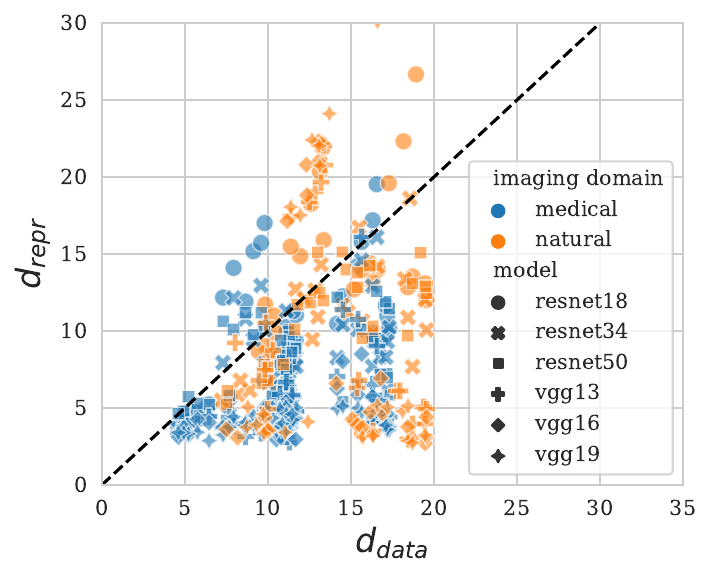}
    \caption{\textbf{Training set intrinsic dimension upper-bounds learned representation intrinsic dimension.} Each point corresponds to a (model, dataset, training set size) triplet.}
    \label{fig:dr_vs_dd}
\end{wrapfigure}

Our findings may provide practical uses beyond merely a better theoretical understanding of these phenomena. For example, we provide a short example of using the network generalization dependence on label sharpness to rank the predicted learning difficulty of different tasks for the same dataset in Appendix C.3. Additionally, the minimum number of annotations needed for an unlabeled training set of images could be inferred given the measured $\dd$ of the dataset and some desired test loss (Eq. \eqref{eq:loss_scaling_dd}), which depends on the imaging domain of the dataset (Fig. \ref{fig:loss_vs_datadim}).\footnote{Note that doing so in practice by fitting the scaling law model to existing ($L$, $N$, $\dd$) results would require first evaluating a wider range of $N$ due to the logarithmic dependence of Eq. \eqref{eq:loss_scaling_dd} on $N$.} This is especially relevant to medical images, where creating quality annotations can be expensive and time-consuming. Additionally, Sec. \ref{sec:exp:robust} demonstrates the importance of using adversarially robust models or training techniques for more vulnerable domains. Finally, the relation of learned representation intrinsic dimension to generalization ability (Sec. \ref{sec:exp:drepr_scaling}) and dataset intrinsic dimension (Theorem \ref{thm:reprdimbound}) could inform the minimum parameter count of network bottleneck layers. 

A limitation of our study is that despite our best efforts, it is difficult to definitively say if training set label sharpness ($\KF$) causes the observed generalization scaling discrepancy between natural and medical image models (Sec. \ref{sec:exp:ddata_scaling}, Fig. \ref{fig:loss_vs_datadim}). We attempted to rule out alternatives via our formal analysis and by constraining many factors in our experiments (\eg, model, loss, training and test set sizes, data sampling strategy, etc.). Additionally, we found that accounting for $\KF$ in the generalization scaling law increases the likelihood of the law given our observed data (Appendix C.1). 
Altogether, our results tell us that $\KF$ constitutes an important difference between natural and medical image datasets, but other potential factors unaccounted for should still be considered.

Our findings provide insights into how neural network behavior varies within and between the two crucial domains of natural and medical images, enhancing our understanding of the dependence of generalization ability, representation learning, and adversarial robustness on intrinsic measurable properties of the training set.

\subsubsection*{Author Contributions}
N.K. wrote the paper, derived the mathematical results, ran the experiments, and created the visualizations. M.A.M. helped revise the paper, the presentation of the results, and the key takeaways.

\subsubsection*{Acknowledgments}
The authors would like to thank Hanxue Gu and Haoyu Dong for helpful discussion and inspiration.




\bibliography{refs}
\bibliographystyle{iclr2024_conference}

\clearpage
\appendix

\doparttoc 
\faketableofcontents 

\addcontentsline{toc}{section}{Appendix} 
\part{Supplementary Materials} 
\parttoc

\section{Mathematical Details and Proofs}
\subsection{Extension of Results to Multi-Class Classification}
\label{app:multiclass}

\paragraph{Generalization scaling laws.}
Our results extend naturally from binary classification to multi-class classification. Given some test point $x_0$ of some unknown target class,
if $x'_{tr}$ is the nearest neighbor to $x_0$ in the training set of the same class (both on $\Md$), the term $\mathop{\E}_{\Dtr\sim\Md}||x_0-x'_{tr}||$ scales in expectation as 
\begin{equation}
\mathcal{O}\left(\left(\frac{N+1}{C}\right)^{-1/\dd}\right) \simeq \mathcal{O}\left(\left(\frac{N}{C}\right)^{-1/\dd}\right)=\mathcal{O}(N^{-1/\dd}),
\end{equation}
where $C$ is the total number of classes, assuming the classes to be evenly sampled in the training set. The same logic can be used for the intrinsic representation dimension $\dr$ to show $\mathcal{O}\left(\left(\frac{N+1}{C}\right)^{-1/\dr}\right) \simeq \mathcal{O}(N^{-1/\dr})$. Therefore, the asymptotic upper bounds in the $\dd$ and $\dr$ scaling laws (Theorems \ref{thm:scaling_dd} and \ref{thm:scaling_dr}, respectively) still hold, as well as the derived result of Theorem \ref{thm:reprdimbound}.

\paragraph{Label sharpness.}
The label sharpness metric $\KFh$ (Eq. \ref{eq:KFest}) was formulated under the binary classification scenario, where data is either labeled with $0$ or $1$ (Sec. \ref{sec:preliminaries}). However, it could potentially be extended to the multi-class scenario by simply replacing the $|y_j-y_k|$ term in the numerator of Eq. \ref{eq:KFest} with the indicator function $1_{y_j\neq y_k}$ as
\begin{equation}
     \KFh := \max\limits_{j, k} \left(\frac{1_{y_j\neq y_k}}{|| x_j - x_k||}\right),
\end{equation}
which clearly simplifies to Eq. \ref{eq:KFest} for binary classification. This modification prevents $\KFh$ from being biased by the numerical value of labels given to different classes, but a more careful extension could be pursued in the future to confirm a properly theoretically-motivated multi-class label sharpness metric.

\subsection{Proof of Theorem \ref{thm:KfsimKF} (Approximating $K_f$ with $K_\mathcal{F}$)}
\label{app:proof:KfsimKF}

\begin{proof}
Let $x_1$ and $x_2$ be arbitrary datapoints sampled from $\Md$, with nearest neighbors in the training set $\Dtr$ of $\xh_1$ and $\xh_2$, respectively. Then,
\begin{multline}
  |f(x_1)-f(x_2)| = |f(x_1)-f(x_2) + (\mathcal{F}(x_1) - \mathcal{F}(x_1) + \mathcal{F}(x_2) - \mathcal{F}(x_2) \\
  + f(\xh_1) - f(\xh_1) + f(\xh_2) - f(\xh_2))| \\
  \leq |f(x_1) - f(\xh_1)| + |f(x_2) - f(\xh_2)| + |\mathcal{F}(x_1) - \mathcal{F}(x_2)|  \\
   + | f(\xh_1) - \mathcal{F}(x_1)| + |f(\xh_2) - \mathcal{F}(x_2) |,
\end{multline}
by the triangle inequality. Because we assumed that $f(x)=\mathcal{F}(x)\,\forall x\in\Dtr$, \ie, the model is well-trained, the last two terms can be changed so that we have 
\begin{multline}
  |f(x_1)-f(x_2)| \leq |f(x_1) - f(\xh_1)| + |f(x_2) - f(\xh_2)| + |\mathcal{F}(x_1) - \mathcal{F}(x_2)|  \\
   + | \mathcal{F}(\xh_1) - \mathcal{F}(x_1)| + |\mathcal{F}(\xh_2) - \mathcal{F}(x_2) |.
\end{multline}
Using the Lipschitz continuity of $f$ and $\mathcal{F}$, we have that
\begin{multline}
\label{eq:lipschitzconsteqproofline}
  |f(x_1)-f(x_2)| \leq K_f(||x_1-\xh_1||+||x_2-\xh_2||)+\KF(||x_1-x_2||+||x_1-\xh_1||+||x_2-\xh_2||) \\
  = \KF||x_1-x_2|| + (K_f+\KF)(||x_1-\xh_1||+||x_2-\xh_2||).
\end{multline}
Recall that the expected nearest-neighbor distance on $\Md$ for some $N$ samples scales as $\mathcal{O}(N^{-1/\dd})$. Then, $\E||x_1-\hat{x}_1|| = \E||x_2-\hat{x}_2|| = \mathcal{O}((N+1)^{-1/\dd}) \simeq \mathcal{O}(N^{-1/\dd})$. If we take the expectation of both sides of Eq. \eqref{eq:lipschitzconsteqproofline} over the training set, we can use this fact to obtain
\begin{equation}
  \E|f(x_1)-f(x_2)| \leq \KF\E||x_1-x_2|| + \mathcal{O}(\max(K_f,\KF)(N^{-1/\dd})).
\end{equation}
But, the term on the right goes to zero as $N\rightarrow\infty$, so then $\mathrm{Pr}\left(|f(x_1)-f(x_2)| \leq \KF||x_1-x_2||\right)\rightarrow 1$ as $N\rightarrow\infty$, or in other words, the probability that $f$ is Lipschitz with the same constant $\KF$ of $\mathcal{F}$. (A very similar proof can also be made to show that $\mathrm{Pr}\left(|\mathcal{F}(x_1)-\mathcal{F}(x_2)| \leq K_f||x_1-x_2||\right)\rightarrow 1$ as $N\rightarrow\infty$). Therefore, the Lipschitz constant of $f$ converges to $\KF$ in probability, or in other words, $K_f\rightarrow \KF$.
\end{proof}

\subsection{Proof of Theorem \ref{thm:scaling_dr} (Generalization Error and Representation Intrinsic Dim. Scaling Law)}
\label{app:proof:drscaling}

\begin{proof}
Let $f$ be written as a composition of an encoder $g$, which outputs the final hidden representations of the input image, and a final output sigmoid (or softmax for multi-class classification) layer $h$, as $f=h\circ g$. Write the true label function $\mathcal{F}$ similarly, as some $\mathcal{F}=\mathcal{H}\circ \mathcal{G}$ for unknown $\mathcal{H}$ and $\mathcal{G}$ analogous to $h$ and $g$.
Assume $h$ and $\mathcal{H}$ to be Lipschitz with respective constants $K_{h}$ and $K_{\mathcal{H}}$. 
Analogous to assuming $f(x)=\mathcal{F}(x)$ for all $x$ in the training set $\Dtr$, posit a similar claim of $g(x)=\mathcal{G}(x):= z$, and $h(z)=\mathcal{H}(z)$, $\forall x\in\Dtr$.

Let $x$ be from the training set $\Dtr$ with nearest neighbor (also in the training set) $\xh$. Recall that we assume that $g(x)=\mathcal{G}(x)\,\forall x\in\Dtr$, and that the loss vanishes at the true target label, as in \citet{bahri2021explaining}. Let $z=g(x)$ and $\zz=g(\xh)$.

Then, as we assumed $f$ and $\mathcal{F}$ to be Lipschitz, 
\begin{align}
  \ell(f(x)) = |\ell(f(x)) - \ell(\mathcal{F}(x)) | &\leq K_L|f(x)-\mathcal{F}(x)| \\
  & = K_L|h(g(x)) - \mathcal{H}(\mathcal{G}(x))| = K_L|h(z) - \mathcal{H}(z)|
\end{align}
where $\ell(f(x))$ is the loss evaluated at a single datapoint, and the first equality is due to the loss vanishing at the true target label ($ \ell(\mathcal{F}(x))=0$), and being non-negative. Continuing,
\begin{align}
    \ell(f(x))  &\leq  K_L|h(z) - \mathcal{H}(z)|\\ 
                &= K_L| h(z) - \mathcal{H}(z) + (h(\zz) - h(\zz) + \mathcal{H}(\zz) - \mathcal{H}(\zz))| \\
                &\leq K_L\left(|h(z)-h(\zz)| + |\mathcal{H}(z)-\mathcal{H}(\zz)| + |h(\zz) - \mathcal{H}(\zz)|\right),
\end{align}
with the last line from the triangle inequality. As $h(z)=\mathcal{H}(z)$ for all $\{z=g(x) :x\in\Dtr\}$, the last term vanishes, allowing us to write
\begin{align}
  \ell(f(x))  &\leq K_L\left(|h(z)-h(\zz)| + |\mathcal{H}(z)-\mathcal{H}(\zz)|\right) \\
  &\leq K_L\left(K_{h}||z-\zz|| + K_{\mathcal{H}}||z-\zz||\right) = K_L(K_{h} + K_{\mathcal{H}})||z-\zz||,
\end{align}
so then 
\begin{align}
  \ell(f(x))  &\leq K_L\left(K_{h}||z-\zz|| + K_{\mathcal{H}}||z-\zz||\right) = K_L(K_{h} + K_{\mathcal{H}})||z-\zz||,
\end{align}
and
\begin{align}
  L &= \E\limits_{x\sim\Dts}\ell(f(x)) \leq  K_L(K_{h} + K_{\mathcal{H}})\E\limits_{z,\zz\sim\Dtr}||z-\zz||,
\end{align}
where the rightmost expectation is taken over all $\{z=g(x) :x\in\Dtr\}$ with corresponding nearest neighbor $\hat{z}$ (on the representation manifold). As the expectation of the nearest-neighbor distance of the representations on the manifold scales as $\mathcal{O} (N^{-1/\dr})$, it follows that  
\begin{align}
  L &= \mathcal{O} (K_L\max(K_{h},K_{\mathcal{H}})N^{-1/\dr}).
\end{align}
Because $h=\mathcal{H}$ on the training set representations, the same procedure as the proof for Theorem \ref{thm:KfsimKF} can be used to show that $K_{\mathcal{H}}\simeq K_{h}$. Finally, note that the output layer $h$ was assumed to be a sigmoid. As the standard sigmoid (or softmax) layer is $1-$Lipschitz \citep{gao2017properties}, $K_{\mathcal{H}}\simeq K_{h}=1$, so then
\begin{align}
  L &\simeq \mathcal{O} (K_LN^{-1/\dr}).
\end{align}
\end{proof}

\subsection{Proof of Theorem \ref{thm:advscaling_KF} (Adversarial Robustness and Label Sharpness Scaling Law)}
\label{app:proof:kfscaling}

\begin{proof}
Proposition 1 of \citep{tsuzuku2018lipschitz} states that $\hat{R}(f, x_0)\geq M_{f,x_0}/(\sqrt{2}K_f)$ where $M_{f,x_0}>0$ is the prediction margin, the difference between the target class prediction and the highest non-target class prediction of $f(x_0)$. Applying Thm. \ref{thm:KfsimKF} given sufficiently large $N$ then gives $\hat{R}(f) = \E_{x_0\sim\Md} \hat{R}(f, x_0) \geq \E_{x_0\sim\Md} M_{f,x_0}/(\sqrt{2}K_f) = \Omega\left({1}/{K_f}\right) \simeq\Omega\left({1}/{\KF}\right)$.
\end{proof}

\subsection{Proof of Theorem \ref{thm:reprdimbound} (Bounding of Representation Intrinsic Dim. with Dataset Intrinsic Dim.)}
\label{app:proof:reprdimbound}

\begin{proof}
The estimation assumption implies that 
\begin{equation}
 K_L\max(K_{f},K_{\mathcal{F}})N^{-1/\dd}\approx K_LN^{-1/\dr} \quad\Rightarrow\quad N^{-1/\dd} \approx \frac{N^{-1/\dr}}{\max(K_{f},K_{\mathcal{F}})},
\end{equation}
after which taking the logarithm of both sides gives
\begin{equation}
  \label{eq:d_cases}
  \begin{cases}
    \dr \lesssim \dd & \text{if } K_f,\KF\leq 1 \\
    \dr \gtrsim \dd  & \text{otherwise,}
  \end{cases}
\end{equation}
\ie, $\dr \lesssim \dd$ if the trained model $f$ and target model $\mathcal{F}$ are 1-Lipschitz (with respect to nearest neighbors in the training set). 

Now, note that in our classification task setting, the decision boundaries/predictions of some $K$-Lipschitz network $f$ are the same as the $1-$Lipschitz version $\frac{1}{K}f$ \citep{bethune2022pay}. As such, the scaling behavior we analyze here of $L$ vs. $\dr$ is the same as if $K_f=1$. As $\KF\simeq K_f$ (Theorem \ref{thm:KfsimKF}), Eq. \eqref{eq:d_cases} can be simplified to just $\dr \lesssim \dd$. In practice, we also found that all datasets had $\KF \ll 1$ (Fig. \ref{fig:kf_vs_dd}), so the first case of Eq. \eqref{eq:d_cases} should hold true anyways.

\end{proof}

\section{Analysis of Intrinsic Dataset Property Characteristics (Intrinsic Dimension and Label Sharpness)}
\subsection{Invariance of Intrinsic Dataset Properties to Transformations}
\label{sec:app:resize}

In Fig. \ref{fig:res_depend}, left, we show that measured dataset intrinsic dimension $\dd$ estimates are barely affected by image resizing over a range of resolutions (square image sizes of $[32, 64, 128, 256, 512]$), with the specific example of $32\times32$ shown in the right of Fig. \ref{fig:dd_invar}. We show the similar result of measured dataset label sharpness $\KFh$ being invariant to image resizing in Fig. \ref{fig:res_depend}, right, and Fig. \ref{fig:kf_invar}, right, besides all datasets' $\KFh$ values being multiplied by the same positive constant (\ie, the relative placement of the $\KFh$ of each dataset stays the same with respect to such transformations). Because this constant is the same for all datasets for the given image resolution, it has no effect on the scaling law result of Eq. \eqref{eq:loss_scaling_dd}, as it can be folded into the constant $a$.

We show similar results for modifying the channel count of images (\ie, modifying all grayscale images to RGB) in the left of Figs. \ref{fig:dd_invar} and \ref{fig:kf_invar}.

\begin{figure}[!htbp]
\centering
\includegraphics[width=0.49\textwidth]{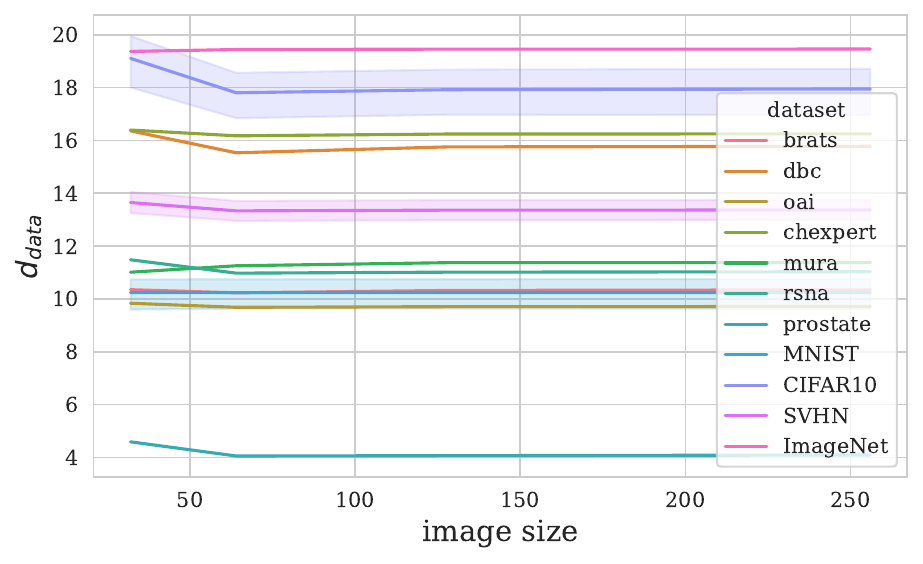}
\includegraphics[width=0.49\textwidth]{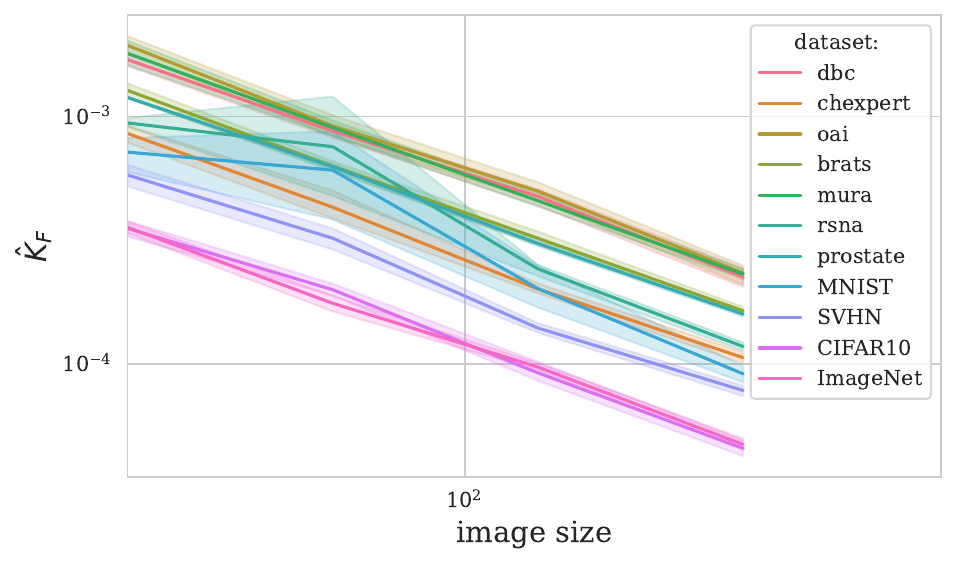}
    \caption{\textbf{Left:} Dependence of measured intrinsic dimension ($\dd$) of the image datasets which we analyze with respect to image size (height and width). $\dd$ values are averaged over all training set sizes; error bars indicate $95\%$ confidence intervals. \textbf{Right:} Same, but for measured dataset label sharpness $\KFh$. $\KFh$ values are averaged over all class pairings (Sec. \ref{sec:KFest}); error bars indicate $95\%$ confidence intervals.}
    \label{fig:res_depend}
\end{figure}

\begin{figure}[!htbp]
\centering
\includegraphics[width=0.45\textwidth]{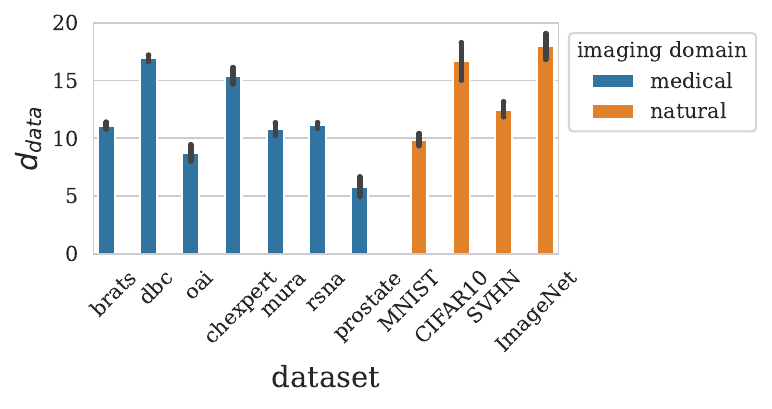}
\includegraphics[width=0.45\textwidth]{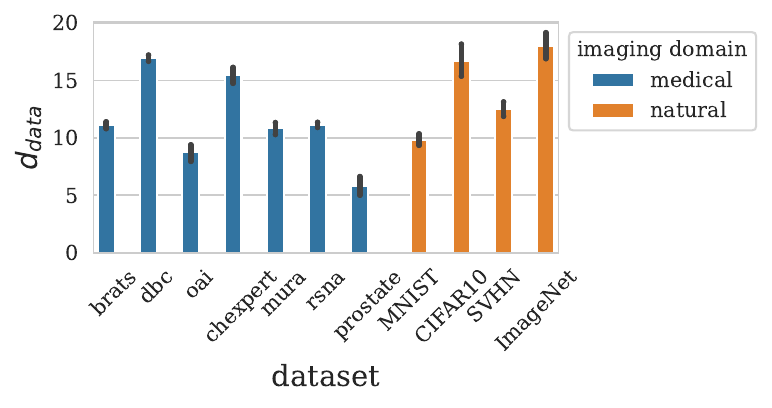}
    \caption{Measured intrinsic dimension ($\dd$) of the natural (orange) and medical (blue) image datasets which we analyze (Sec. \ref{sec:data}), for all images changed to RGB/$3$-channel (\textbf{left}), and all images resized to $32\times32$ (\textbf{right}). $\dd$ values are averaged over all training set sizes; error bars indicate $95\%$ confidence intervals. Compare to the default results in Fig. \ref{fig:kf_vs_dd}, left ($224\times224$, original image channel counts) for reference.}
    \label{fig:dd_invar}
\end{figure}

\begin{figure}[!htbp]
\centering
\includegraphics[width=0.45\textwidth]{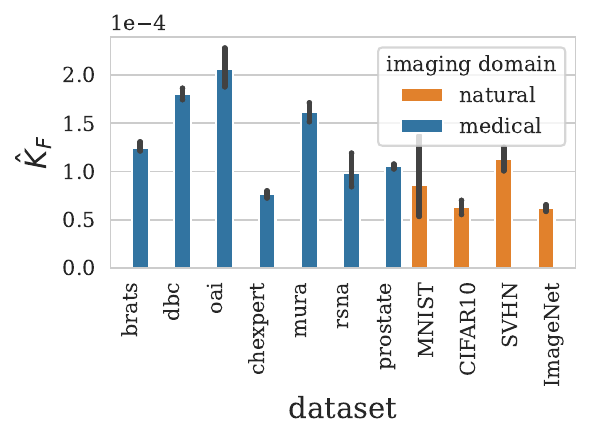}
\includegraphics[width=0.45\textwidth]{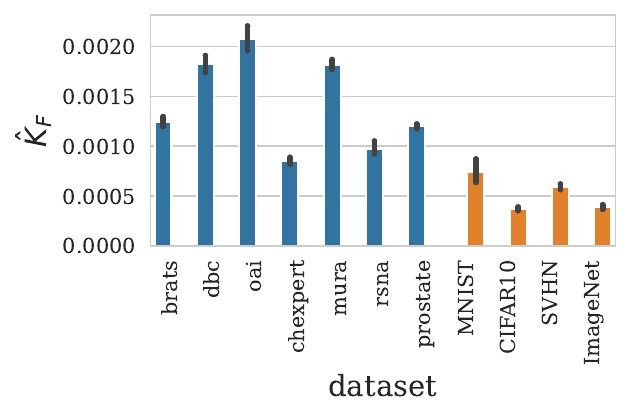}
    \caption{Measured label sharpnesses ($\KFh$) of the natural (orange) and medical (blue) image datasets which we analyze (Sec. \ref{sec:data}), for all images changed to RGB/$3$-channel (\textbf{left}), and all images resized to $32\times32$ (\textbf{right}). $\KFh$ values are averaged over all class pairings (Sec. \ref{sec:KFest}); error bars indicate $95\%$ confidence intervals. Compare to the default results in Fig. \ref{fig:kf_vs_dd}, right ($224\times224$, original image channel counts) for reference.}
    \label{fig:kf_invar}
\end{figure}

\section{Additional Results, Extensions, and Applications}

\subsection{Likelihood Analysis of Theoretical and Empirical Generalization Scaling Laws}
We hypothesized in the main text that the observed discrepancies in generalization scaling between natural and medical images with respect to intrinsic dataset dimension $\dd$ (Fig. \ref{fig:loss_vs_datadim}) were at least partially caused by the notable differences in dataset label sharpness $(\KF)$ between these two domains, indicated by our derived generalization scaling law of Equation \eqref{eq:loss_scaling_dd}. If we take Eq. \eqref{eq:loss_scaling_dd} as an equality (in other words, a model that can be regressed to the observed generalization data in Fig. \ref{fig:loss_vs_datadim}), we can analyze the likelihood that the observed shift between domains is caused by the scaling law's accounting for $\KF$ by seeing if the likelihood of our scaling law model (Model \textbf{A}) which accounts for $\KF$,
\begin{equation}
\label{eq:app:likemodeleqn}
y_{\mathbf{A}}(\dd, N, \KF ; a) := \log L \simeq -\frac{1}{\dd}\log N + \log \KF + a
\end{equation}
is higher than the likelihood of a model that does not account for $\KF$ (Model \textbf{B}),
\begin{equation}
y_{\mathbf{B}}(\dd, N; a) := \log L \simeq -\frac{1}{\dd}\log N + b.
\end{equation}
Here, recall that $L$ is the test loss of a trained network given the intrinsic dimension $\dd$ and label sharpness $\KF$ of the network's training dataset (Sec. \ref{sec:preliminaries}), and $N$ is the size of the training set. Each of the two scaling law models \textbf{A} and \textbf{B} will be fit to the observed generalization scaling data $D$: $D=\{(L; \dd, N, \KF)_i\}_{\forall i}$ for model \textbf{A} and $D=\{(L; \dd, N)_i\}_{\forall i}$ for model \textbf{B}, using all result data $i$ for a given network architecture (\ie, the datapoints in Fig. \ref{fig:loss_vs_datadim}); the fitted parameters are $a$ and $b$, for each respective model. We obtained these fitted models using SciPy's \texttt{curve\_fit} function \citep{2020SciPy-NMeth}, resulting in best-fit parameters of $\hat{a}$ and $\hat{b}$.

The likelihood ratio between two models is a well-known statistical test for determining the model that better explains the observed data \citep{vuong1989likelihood}, and is defined by $\mathcal{R}:={p(D | \textrm{model \textbf{A}})}/{p(D | \textrm{model \textbf{B}})}$. For such regression problems, the likelihood ratio is evaluated as 
\begin{equation}
\displaystyle
    \mathcal{R} = \frac{p(D | \textrm{model \textbf{A}})}{p(D | \textrm{model \textbf{B}})} = \frac{\exp\left[-\frac{1}{2}\sum_i(\log L_i - y_{\mathbf{A}}(d_{\mathrm{data}, i}, N_i, K_{\mathcal{F}, i} ; \hat{a}))^2\right]}{\exp\left[-\frac{1}{2}\sum_i(\log L_i - y_{\mathbf{B}}(d_{\mathrm{data}, i}, N_i; \hat{b}))^2\right]}.
\end{equation}

Here, $\log\mathcal{R} > 0 $ will indicate that model \textbf{A} explains the data better, $\log\mathcal{R} < 0$ will indicate that model \textbf{B} explains the data better, and  $\log\mathcal{R}\approx 0$ indicates that neither model is preferred.

As shown in Table \ref{tab:likelihoodcompare_dd}, we found that $\log\mathcal{R} > 0$ by a large margin for all network architectures, supporting the importance of accounting for $\KF$ in the scaling law, due to the variability of it across different domains.
These results seem reasonable because as shown in Fig \ref{fig:loss_vs_datadim}, there is a visible separation between the loss curves for the domains of natural and medical images. Allowing the scaling law to account for the label sharpness $\KF$ of the dataset will make it more accurate because different datasets possess different $\KF$ values (Fig. \ref{fig:kf_vs_dd}), and by Equation \eqref{eq:app:likemodeleqn}, different $\KF$ values will move the loss curve up and down.

\begin{table}[!htbp]
\centering
\begin{tabular}{ccc|ccc}
\hline
    ResNet-18 & ResNet-34 & ResNet-50 & VGG-13 & VGG-16 & VGG-19 \\
    \hline
    $13.5$ & $7.6$ & $11.7$ & $8.1$ & $10.5$ & $12.3$  \\
    \hline
    \end{tabular}
\caption{Log-ratio $\log\mathcal{R}$ between (\textbf{A}) the likelihood of the network generalization $\dd$ scaling law model that accounts for label sharpness, and (\textbf{B}) the likelihood of the scaling law model that does not, given generalization data observed in our experiments (Fig. \ref{fig:loss_vs_datadim}), for each network architecture.}
\label{tab:likelihoodcompare_dd}
\end{table}



\subsection{Evaluating a Dataset from an Additional Domain}
\label{app:isic}

In this section, we extend our analysis to a new dataset from a third domain beyond natural images and radiology images, in order to determine whether our hypotheses extend to other domains (\eg, that dataset label sharpness is related to which domain the dataset is within). We use the ISIC skin lesion image dataset of \cite{codella2018skin}, which interestingly, has certain characteristics that both natural and radiological images share, such as being RGB photographs (like natural images), and having standardized acquisition procedure and object framing for the purpose of clinical tasks (like radiological images). For all experiments we use the task/labeling for melanocytic nevus detection.

First, we find that ISIC has an intrinsic dimension $\dd\simeq 12$ that is in between typical natural image dataset $\dd$ values and typical radiology dataset $\dd$ values (Fig. \ref{fig:kf_vs_dd_isic}, left). We similarly see that its label sharpness $\KFh\simeq 10^{-4}$ is in the upper end of typical natural image dataset $\KFh$ values, and below all radiology dataset $\KFh$ (Fig. \ref{fig:kf_vs_dd_isic}, right). It makes intuitive sense that these intrinsic properties of the ISIC dataset are in between the two domains of natural and radiological images, given the aforementioned characteristics of images from both domains that it possesses. 

\begin{figure}[!htbp]
\centering
\includegraphics[width=0.54\textwidth]{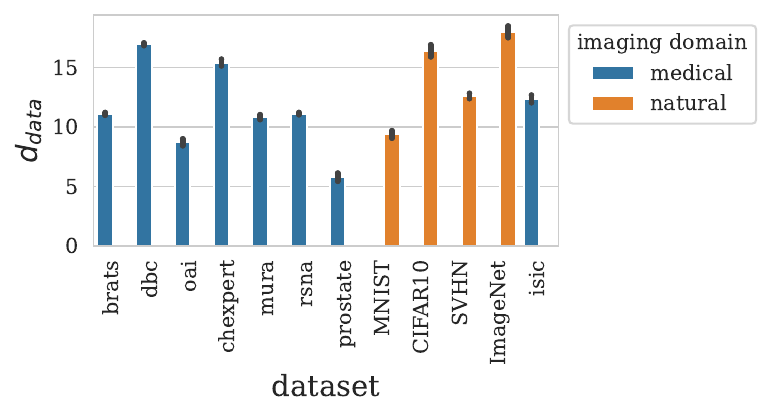}
\includegraphics[width=0.41\textwidth]{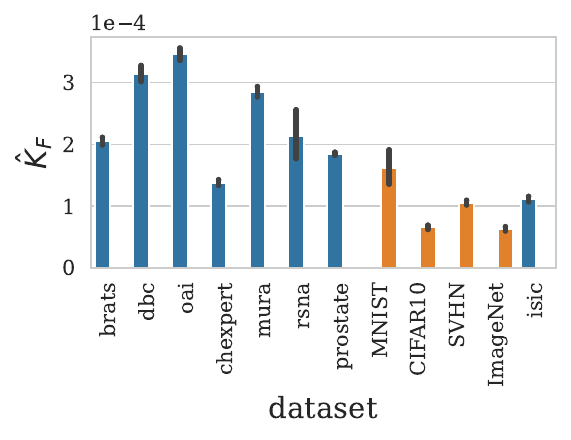}
    \caption{Measured intrinsic dimension ($\dd$, \textbf{left}) and label sharpnesses ($\KFh$, \textbf{right}) of the natural (orange) and medical (blue) image datasets which we analyze (Sec. \ref{sec:data}), \textbf{with the ISIC dataset included on the right of both figures.}
    $\dd$ values are averaged over all training set sizes, and $\KFh$ over all class pairings (Sec. \ref{sec:KFest}); error bars indicate $95\%$ confidence intervals.}
    \label{fig:kf_vs_dd_isic}
\end{figure}

We next performed the same generalization experiments as in the main text for ISIC, training each network model for the assigned task with $N=1750$. Given our generalization scaling law of Eq. \eqref{eq:loss_scaling_dd}, ISIC having a $\KF$ value between the typical respective values of natural and radiological domains would imply that models trained on the dataset would have test loss values between the models trained on these two domains, given ISIC's $\dd$. We see in Fig. \ref{fig:loss_vs_datadim_isic} that this was indeed the case for all network architectures; the generalization ability of the ISIC models (indicated by purple circles) are between the typical generalization curves of natural image models and radiological image models.

\begin{figure}[!htbp]
\centering
\includegraphics[width=0.98\textwidth]{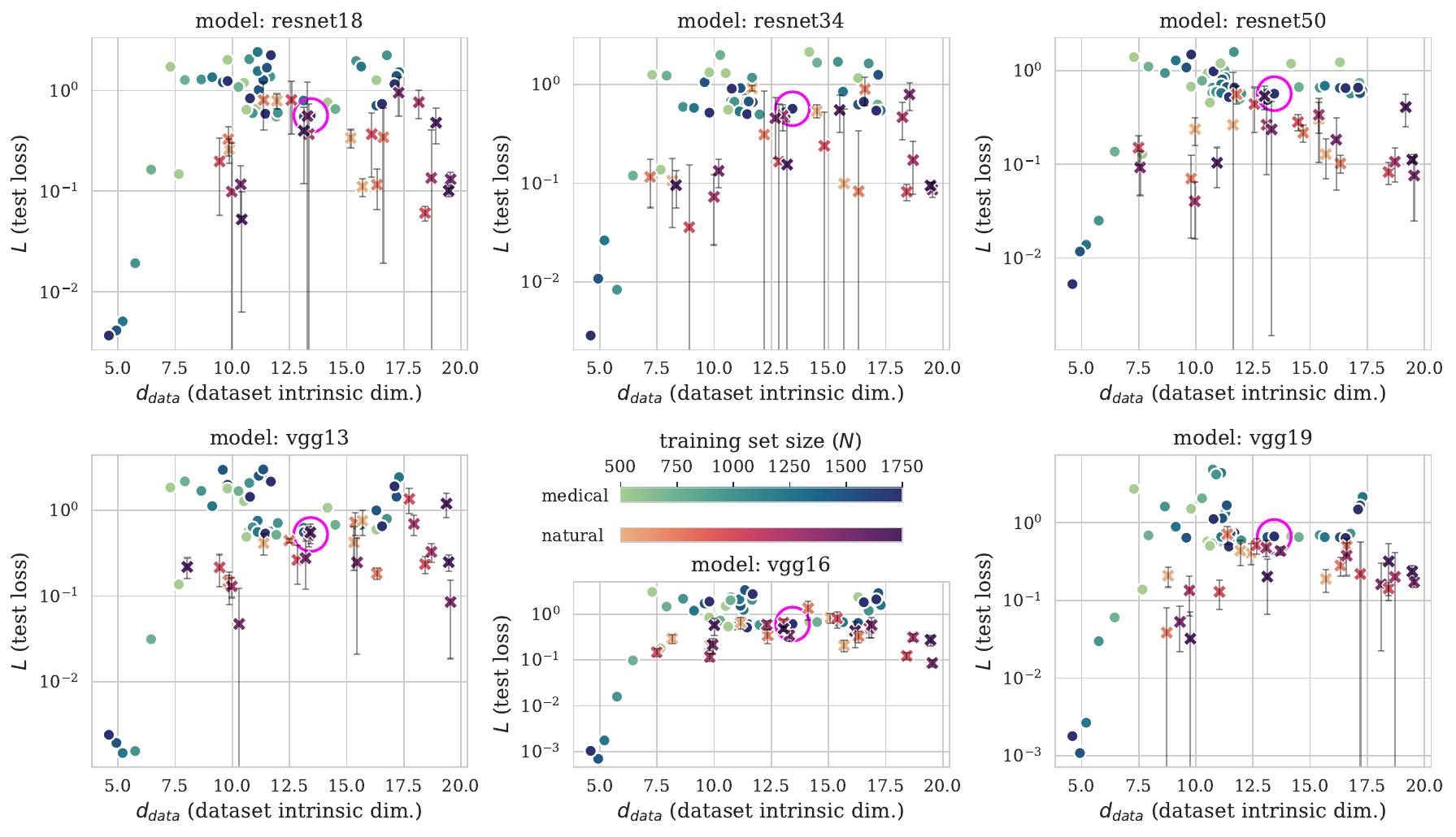}
\caption{Same as Fig. \ref{fig:loss_vs_datadim}, but with ISIC dataset results added with purple circles.
}
\label{fig:loss_vs_datadim_isic}
\end{figure}

Moreover, the ``in-between'' $\KF$ of ISIC also implies that models trained on this dataset would be more adversarially robust than the radiological image models (with their high dataset $\KF$ values), yet less robust than the natural image models (with their low dataset $\KF$) (Theorem \ref{thm:advscaling_KF}). In Fig. \ref{fig:atk_vs_kf_isic} we see that this is the case for some network architectures, while for others, ISIC models (purple circles) end up close to the natural image models.

\begin{figure}[!htbp]
\centering
\includegraphics[width=0.7\textwidth]{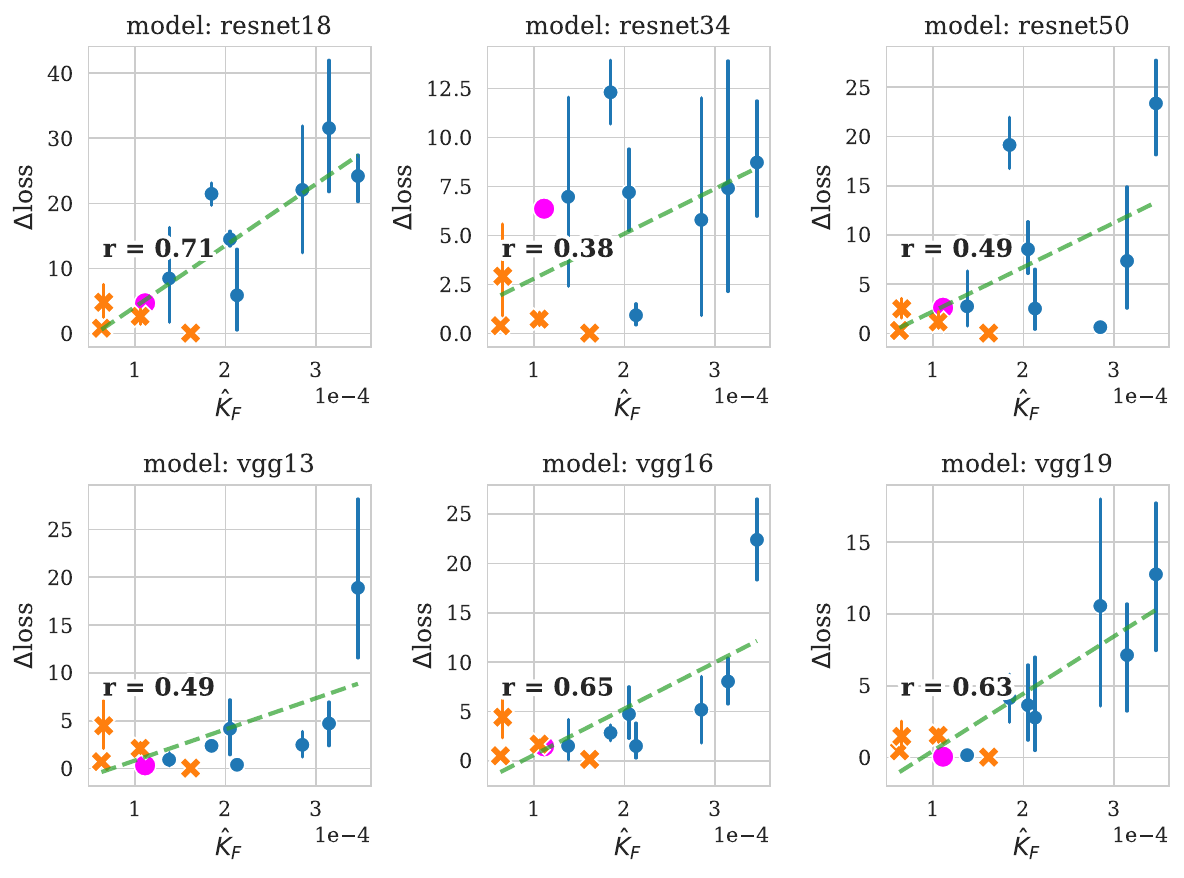}
    \caption{Same as Fig. \ref{fig:atk_vs_kf}, but with ISIC dataset results added with purple circles.}
    \label{fig:atk_vs_kf_isic}
\end{figure}

\subsection{Practical Application: Task Selection for Medical Images}
\label{sec:app:task_select}

In this section we will demonstrate a practical usage of our formalism. It is common for new medical image datasets to come equipped with many different labels provided by clinical annotators, prior to any attempt to train a model to learn to make such predictions from the data. The question we examine in this section is: \textit{given a new dataset with a variety of image labels, which tasks will be easier for a model learn, and which will be harder?} This is an important question to guide the model development process of practitioners who wish to take the first steps of training models for automated diagnosis of a new dataset and/or modality, the answer of which may not be clear solely from the visible image characteristics.

For example, the RSNA-IH-CT dataset (Sec. \ref{sec:data}) was annotated with labels for different types of hemorrhages, but some could be easier to detect than others. Consider that we wish to decide whether to train a binary classification model to (1) detect \textit{any} type of hemorrhage out of 5 sub-types or (2) detect a specific type, such as epidural hemorrhage. Na\"ively, it may seem that the second task is more specific and therefore may be more challenging, yet if some visual characteristic makes epidural hemorrhages easily noticeable, the first task could be more challenging, as it requires learning to differentiate between (a) healthy cases and (b) each type of hemorrhage. We can get a general idea for the relative difficulty of these two tasks using our derived scaling law, as follows.

Let's say that we wish to estimate which task is likely to be more challenging for a given model to learn by determining which has the higher expected test loss $L$. Our scaling law (Eq. \eqref{eq:loss_scaling_dd}) estimates that $L\simeq \mathcal{O}(\KF N^{-1/\dd})$, but because the equation is a bound (not an equality), estimating absolute test loss values is not feasible. However, if we instead consider the \textit{ratio} of test losses for two different possible tasks on the same dataset, a prediction is more tractable. While $N$ and $\dd$ are both independent of task choice, the label sharpness $\KF$ (Sec. \ref{sec:KFest}) will change depending on the labels assigned to the data for the given task, which can be quickly measured from the dataset without any model training. If we take $\KF^{(1)}$ and $L^{(1)}$ to be the measured label sharpness and expected test loss for the first task (detection of any hemorrhage), respectively, and likewise for $\KF^{(2)}$ and $L^{(2)}$ for the second task (epidural hemorrhage detection), we get that approximately,
\begin{equation}
\label{eq:taskpred}
    \frac{L^{(1)}}{L^{(2)}} \underset{\sim}{\propto} \frac{\KF^{(1)} N^{-1/\dd}}{\KF^{(2)} N^{-1/\dd}} = \frac{\KF^{(1)}}{\KF^{(2)}},
\end{equation}
implying that the task with the higher $\KF$ will likely be more challenging for the model (higher test loss $L$).

To test this, we measured $\KFh^{(1)} = 2.1 \pm 0.4 \times 10^{-4}$ and $\KFh^{(2)} = 1.45 \pm 0.06 \times 10^{-4}$ for the two respective tasks (95\% CI over 25 evaluations of $M^2$ pairings $M=1000$, as in Sec. \ref{sec:KFest} and Fig. \ref{fig:kf_vs_dd}). Although approximate, Eq. \eqref{eq:taskpred} indicates that task 2 will be easier. We then trained each of our evaluated models for each of the two tasks, with results shown in Table \ref{tab:taskcompare} ($N=1750$ and all other training details are the same as for the main paper experiments). We see that all models obtained lower test loss on task 2 than on task 1, and similarly obtained higher test accuracy, indicating that task 2 was indeed easier.

\begin{table}[!htbp]
\fontsize{9pt}{9pt}\selectfont
\centering
\begin{tabular}{l||ccc|ccc||c}
\hline
     & ResNet-18 & ResNet-34 & ResNet-50 & VGG-13 & VGG-16 & VGG-19  & $\KFh$ \\
    \hline
    \textbf{Task 1} & $1.29$ & $1.23$ & $1.03$ & $0.69$ & $0.50$ & $0.51$ & $2.1 \pm 0.4$ \\
    \textbf{Task 2} & $0.64$ & $0.66$ & $0.66$ & $0.63$ & $0.62$ & $0.90$ & $1.45 \pm 0.06$ \\
    \hline
    \textbf{Task 1} & $76\%$ & $74\%$  & $74\%$ & $73\%$ & $75\%$ & $76\%$ & $2.1 \pm 0.4$ \\
    \textbf{Task 2} & $80\%$ & $83\%$ & $83\%$  & $85\%$ & $82\%$ & $81\%$ & $1.45 \pm 0.06$ \\
    \hline
    \end{tabular}
\caption{\textbf{Top section:} Test set \textbf{loss} for each model trained on each of the two hemorrhage detection tasks, alongside the measured label sharpness $\KFh$ for each task (Task 1 is detecting any hemorrhage, Task 2 is detecting epidural hemorrhage). \textbf{Bottom section:} Same, but for test set \textbf{accuracy}.}
\label{tab:taskcompare}
\end{table}

Note that Equation \eqref{eq:taskpred} is just an approximation, and that tasks with more similar measured $\KF$ values for the same dataset could be harder to distinguish. Of course, this experiment is just an example, and future study with other datasets is warranted.

\subsection{Evaluation at Much Higher Training Set Sizes}

While many of our datasets do not support going to substantially higher training set sizes than our main experiments' maximum of $N=1750$ (see Sec. \ref{sec:data}), we can still evaluate the generalization scaling of models training on two datasets that do allow for significantly higher $N$. To this end, we trained each of our six models on the CheXpert medical image dataset and on the CIFAR-10 natural image dataset (for classes 1 and 2) at the highest training set size possible for binary classification on these datasets, $N=9250$. We would expect from our generalization scaling law (Eq. \eqref{eq:loss_scaling_dd}), that for a fixed dataset (and therefore $\dd$ and $\KF$) and architecture, the loss would decrease with higher $N$. The results of this are shown in Tables \ref{tab:highN_cifar} and \ref{tab:highN_chexpert} below; we see that this is indeed the case for all models (lower loss for higher training set size). We also see that the general trend of the natural image models having much lower loss than the medical image models is maintained, even though these two datasets have similar intrinsic dimensions ($d_{data}\simeq 15-17$).

\begin{table}[!htbp]
\centering
\begin{tabular}{@{}l|llllll@{}}
$N$ & ResNet-18 & ResNet-34 & ResNet-50 & VGG-13 & VGG-16 & VGG-19 \\ \hline
9250 & 0.1660 & 0.1821 & 0.1179 & 0.1086 & 0.1045 & 0.0828 \\
1000 & 0.5312 & 0.7402 & 0.5128 & 0.9764 & 0.6001 & 0.3974
\end{tabular}
\caption{Test losses for models trained on CIFAR-10 binary classification for high training set size $N=9250$ compared to those trained on $N=1000$.}
\label{tab:highN_cifar}
\end{table}

\begin{table}[!htbp]
\centering
\begin{tabular}{@{}l|llllll@{}}
$N$ & ResNet-18 & ResNet-34 & ResNet-50 & VGG-13 & VGG-16 & VGG-19 \\ \hline
9250 & 0.7712 & 0.6370 & 0.6789 & 0.6014 & 0.6014 & 0.6016 \\
1000 & 1.3479 & 0.7894 & 0.9793 & 0.6700 & 0.7409 & 0.6806
\end{tabular}
\caption{Test losses for models trained on CheXpert binary classification for high training set size $N=9250$ compared to those trained on $N=1000$.}
\label{tab:highN_chexpert}
\end{table}

\subsection{Dependence of Network Performance on Image Resolution}
It seems plausible that training a network to perform certain medical image binary classification tasks would be difficult at low image resolutions, due to the visual similarity of positive and negative images for some tasks (as any opposed to the typically low visual similarity of images from different classes in natural image datasets). To test this, we trained a ResNet-18 on each medical image dataset (with $N=1750$ and all other training settings at their defaults) over a wide range of image resolutions (square image sizes of $[32, 64, 128, 256, 512]$), to see if the test accuracy was smaller for low resolutions. The results are shown in Fig. \ref{fig:acc_vs_res}, and surprisingly, there is little performance drop for small resolutions. This may actually make sense, considering datasets like MedMNIST \citep{medmnistv2}, where training for a wide variety of medical image classification tasks is possible even at $28\times28$ resolution. Of course, this would probably not be the case for more fine-grained tasks such as semantic segmentation.

\begin{figure}[!htbp]
\centering
\includegraphics[width=0.7\textwidth]{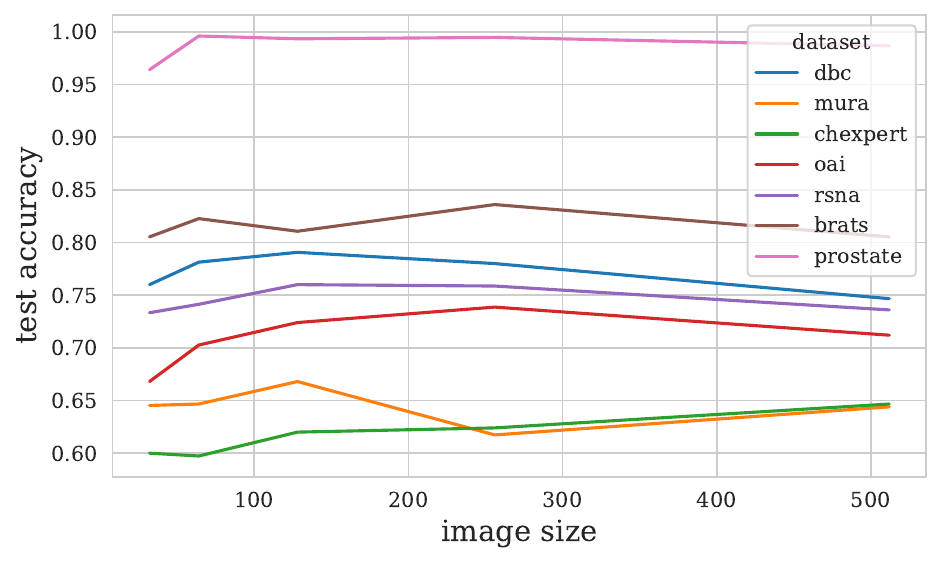}
    \caption{Dependence of network performance on image size for different medical image classification datasets (ResNet-18, training set size of $1750$).}
    \label{fig:acc_vs_res}
\end{figure}

\section{Additional Visualizations}

\subsection{Example Adversarial Attacks on Medical Images}
\label{sec:app:eg_atk_med}

We show example attacked medical images for each dataset in Fig. \ref{fig:eg_atks}.

\begin{figure}[!htbp]
\centering
\includegraphics[width=0.98\textwidth]{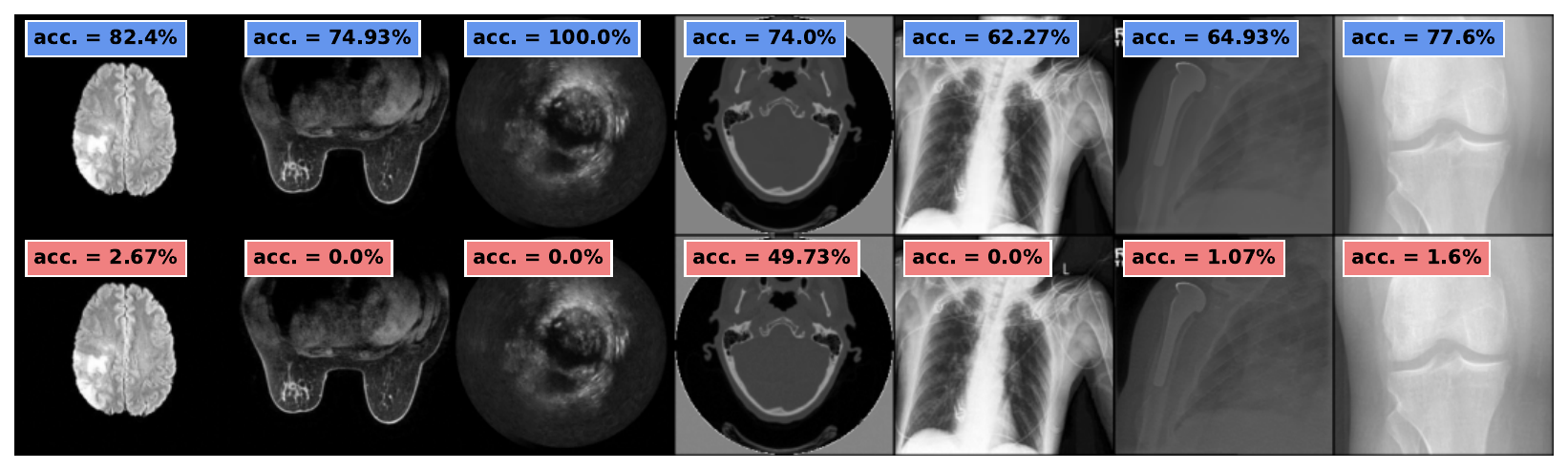}
    \caption{\textbf{Susceptibility of medical images to adversarial attack.} \textbf{Top row:} test set prediction accuracy of models trained on each medical image dataset for its corresponding diagnostic task (Sec. \ref{sec:data}), with example test images shown. \textbf{Bottom Row:} accuracies after each test set was attacked by FGSM ($\epsilon=2/255$), with example attacked images shown. The models are ResNet-18s with training set sizes of $N=1750$.}
    \label{fig:eg_atks}
\end{figure}

\section{Main Results with Other Metrics}

In this section we will show our main results but with other metrics for generalization, adversarial robustness, and/or intrinsic dimensionality.

\subsection{Generalization Scaling with $\dd$ and $\dr$}
\label{app:moreresults:scaling}

\paragraph{Continuation of Sec. \ref{sec:exp:ddata_scaling}.} In Fig. \ref{fig:acc_vs_datadim} we show the scaling of test \textit{accuracy} with intrinsic dataset dimension $\dd$, using the default MLE estimator (Sec. \ref{sec:dest}). In Figs. \ref{fig:loss_vs_datadim_twonn} and \ref{fig:acc_vs_datadim_twonn} we show the scaling of test loss and accuracy, respectively, but instead using TwoNN (Sec. \ref{sec:dest}) to estimate $\dd$.

\begin{figure}[!htbp]
\centering
\includegraphics[width=0.98\textwidth]{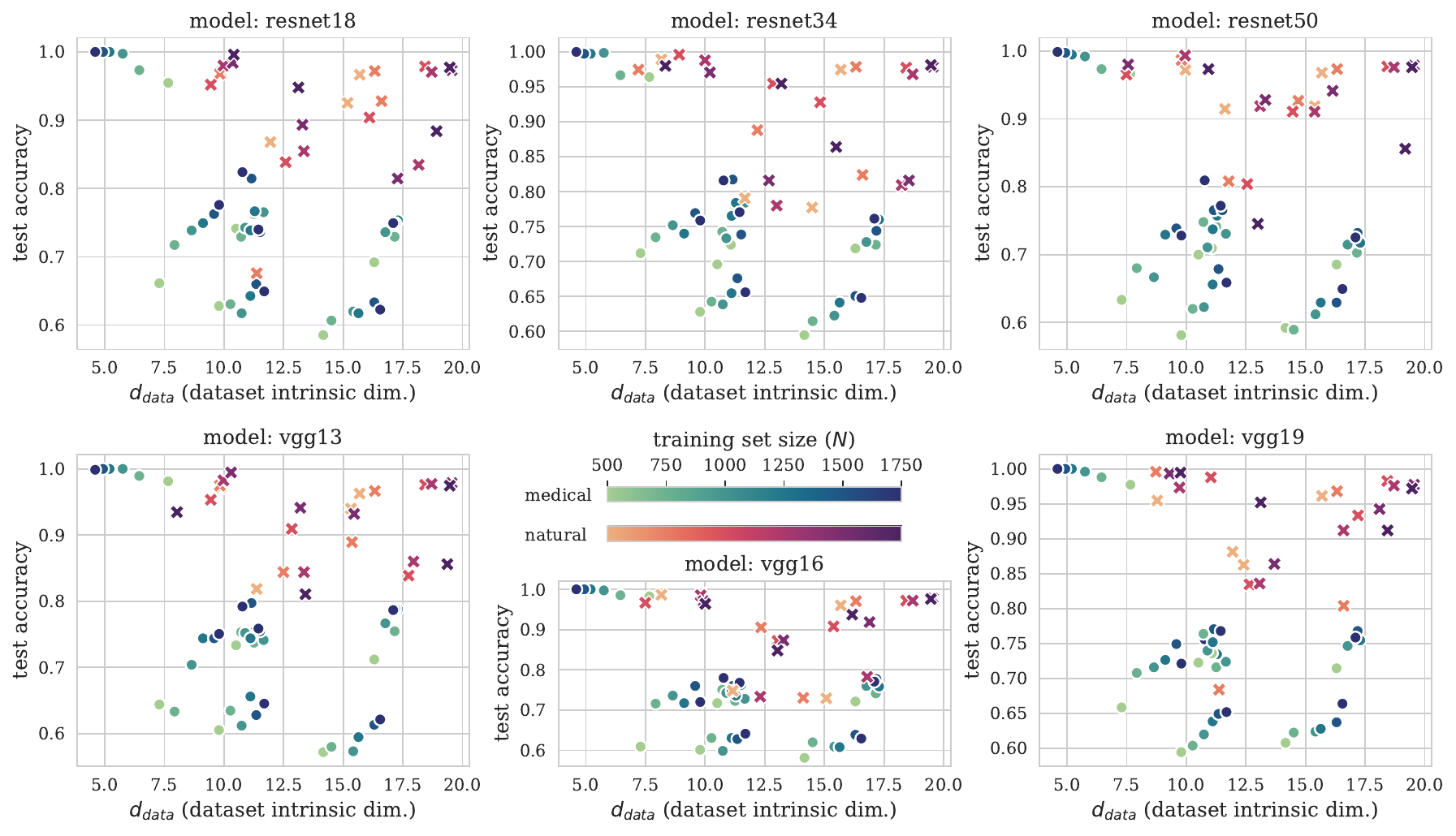}
    \caption{Scaling of test accuracy/generalization ability with training set intrinsic dimension ($\dd$) for natural and medical datasets.}
    \label{fig:acc_vs_datadim}
\end{figure}

\begin{figure}[!htbp]
\centering
\includegraphics[width=0.98\textwidth]{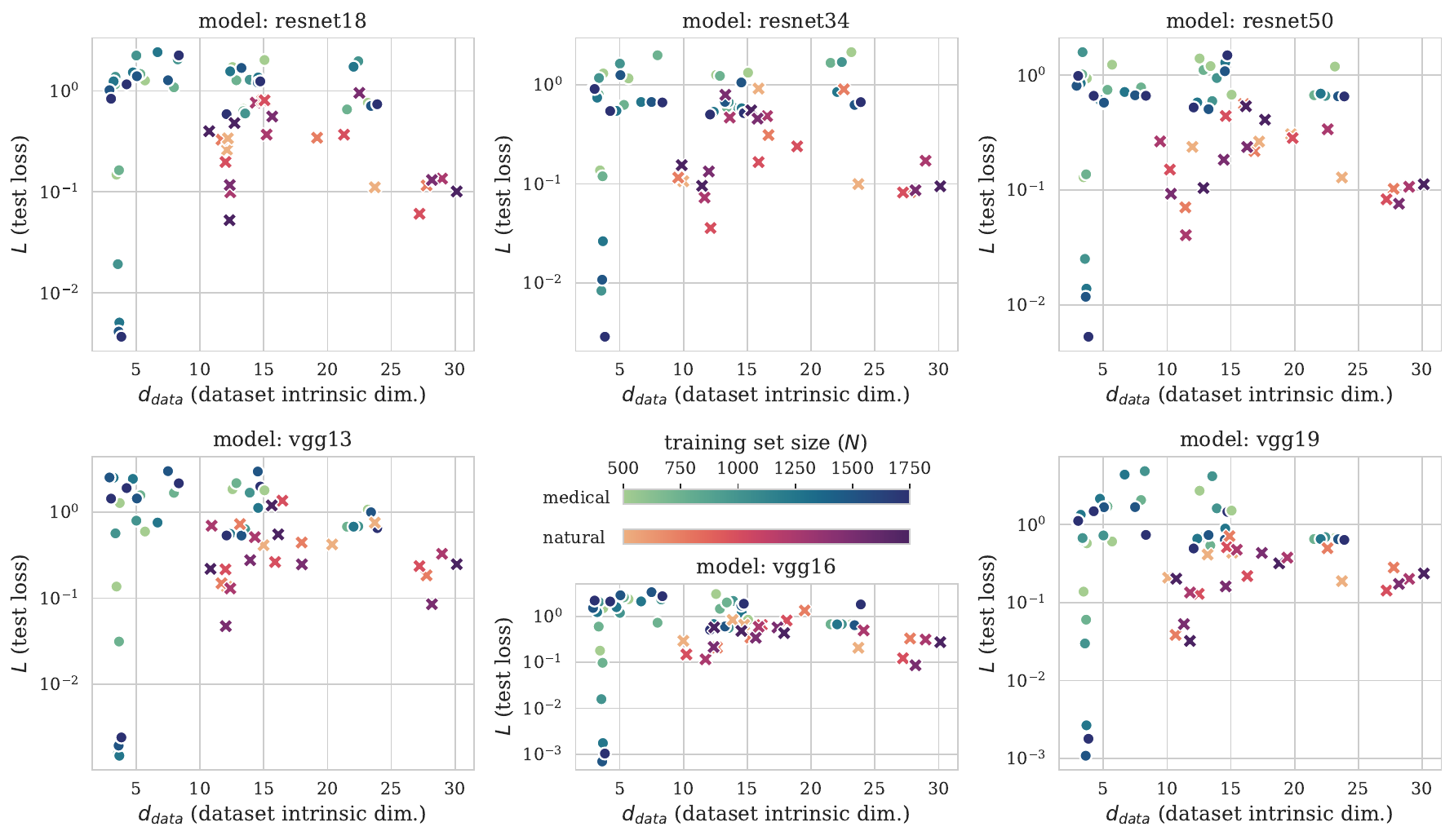}
    \caption{Scaling of log test loss/generalization ability with training set intrinsic dimension ($\dd$) for natural and medical datasets, with $\dd$ computed via TwoNN \citep{facco2017estimating}.}
    \label{fig:loss_vs_datadim_twonn}
\end{figure}

\begin{figure}[!htbp]
\centering
\includegraphics[width=0.98\textwidth]{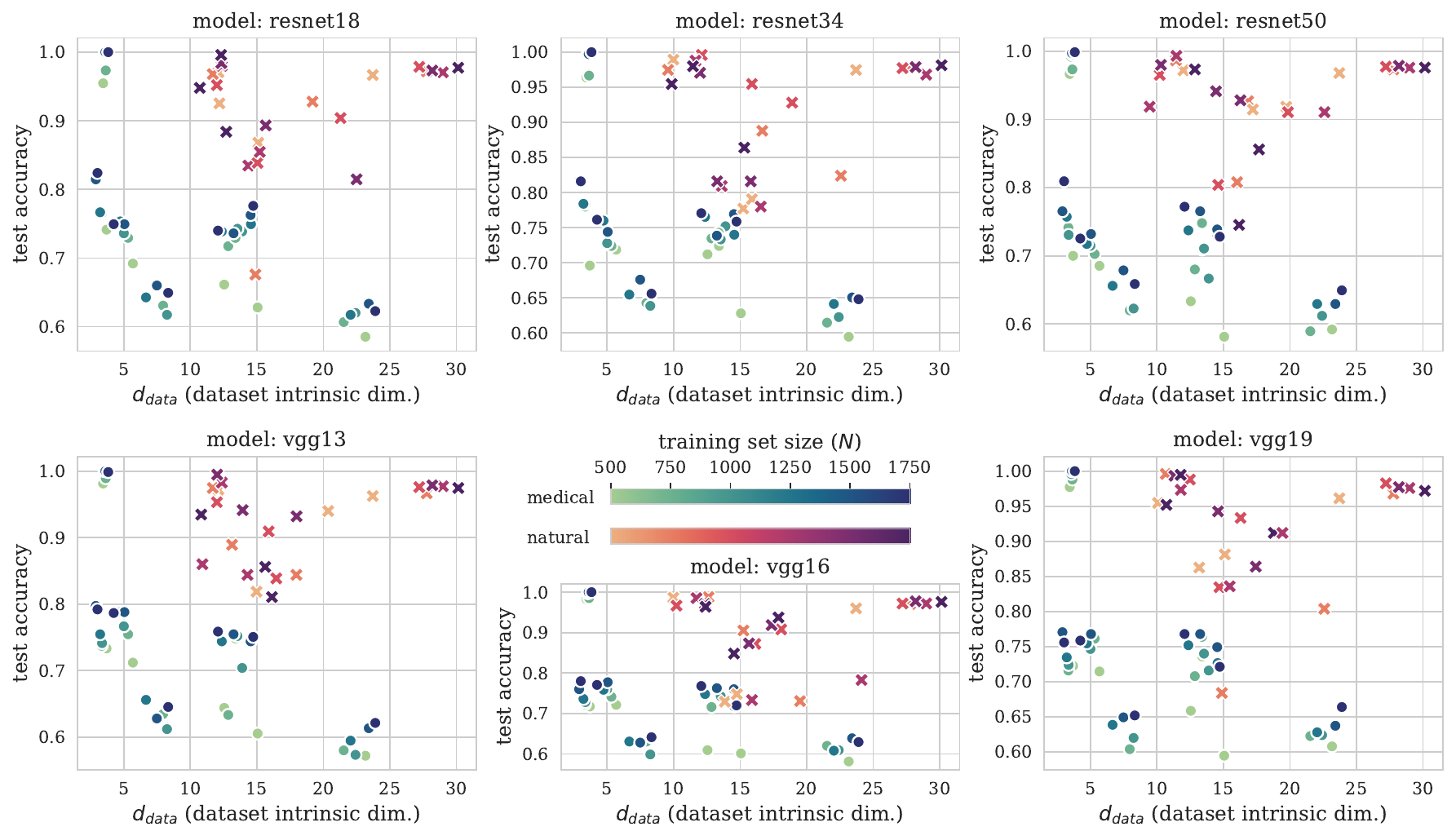}
    \caption{Scaling of test accuracy/generalization ability with training set intrinsic dimension ($\dd$) for natural and medical datasets, with $\dd$ computed via TwoNN \citep{facco2017estimating}.}
    \label{fig:acc_vs_datadim_twonn}
\end{figure}

\paragraph{Continuation of Sec. \ref{sec:exp:drepr_scaling}.} Next, in Fig. \ref{fig:acc_vs_reprdim} we show the scaling of test \textit{accuracy} with learned representation intrinsic dimension $\dr$, using the default TwoNN estimator (Sec. \ref{sec:dest}). In Figs. \ref{fig:loss_vs_reprdim_twonn} and \ref{fig:acc_vs_reprdim_twonn} we show the scaling of test loss and accuracy, respectively, but instead using MLE (Sec. \ref{sec:dest}) to estimate $\dr$.

\begin{figure}[!htbp]
\centering
\includegraphics[width=0.98\textwidth]{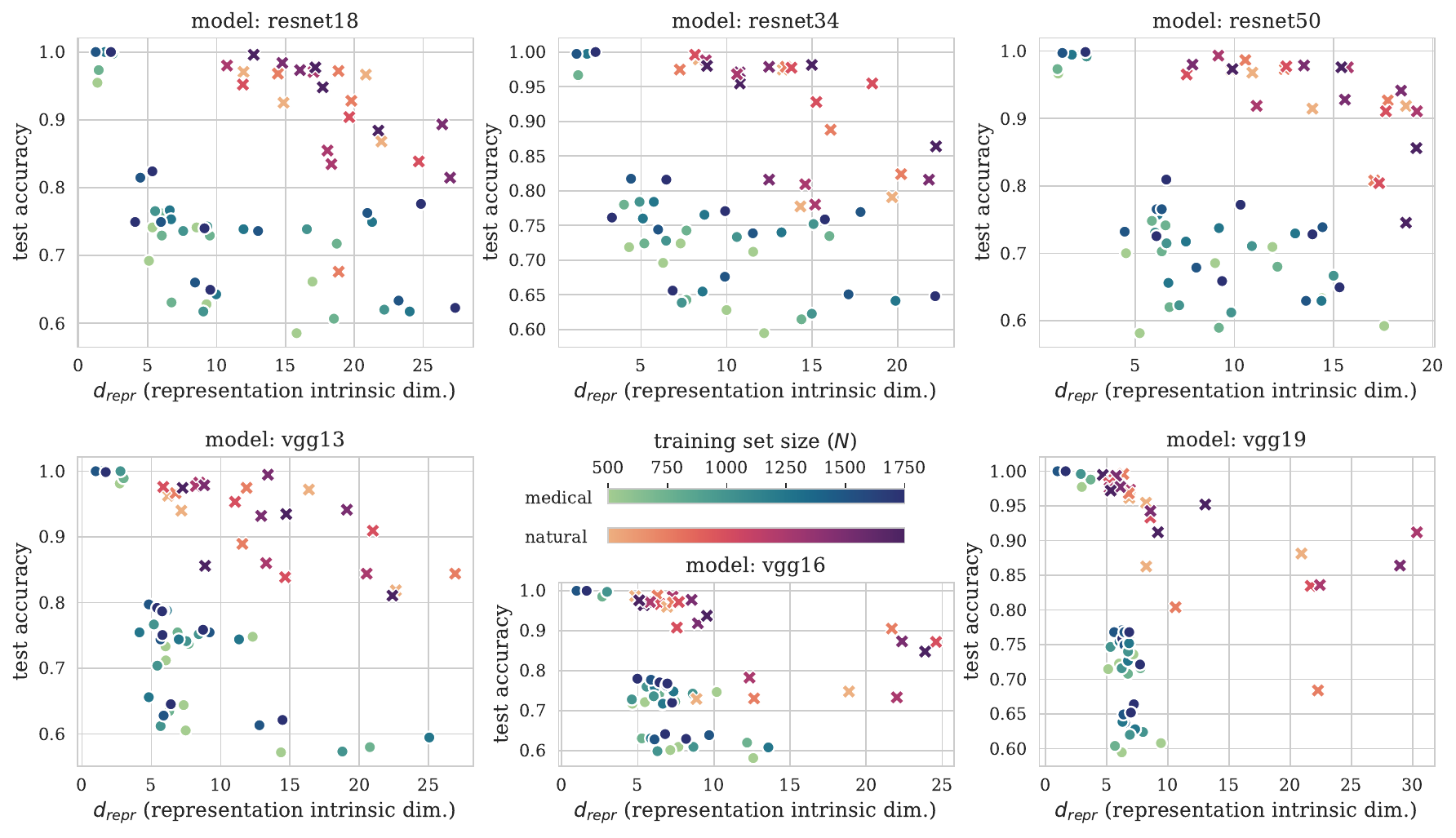}
    \caption{Scaling of test accuracy/generalization ability with the intrinsic dimension of final hidden layer learned representations of the training set ($\dr$) for natural and medical datasets.}
    \label{fig:acc_vs_reprdim}
\end{figure}

\begin{figure}[!htbp]
\centering
\includegraphics[width=0.98\textwidth]{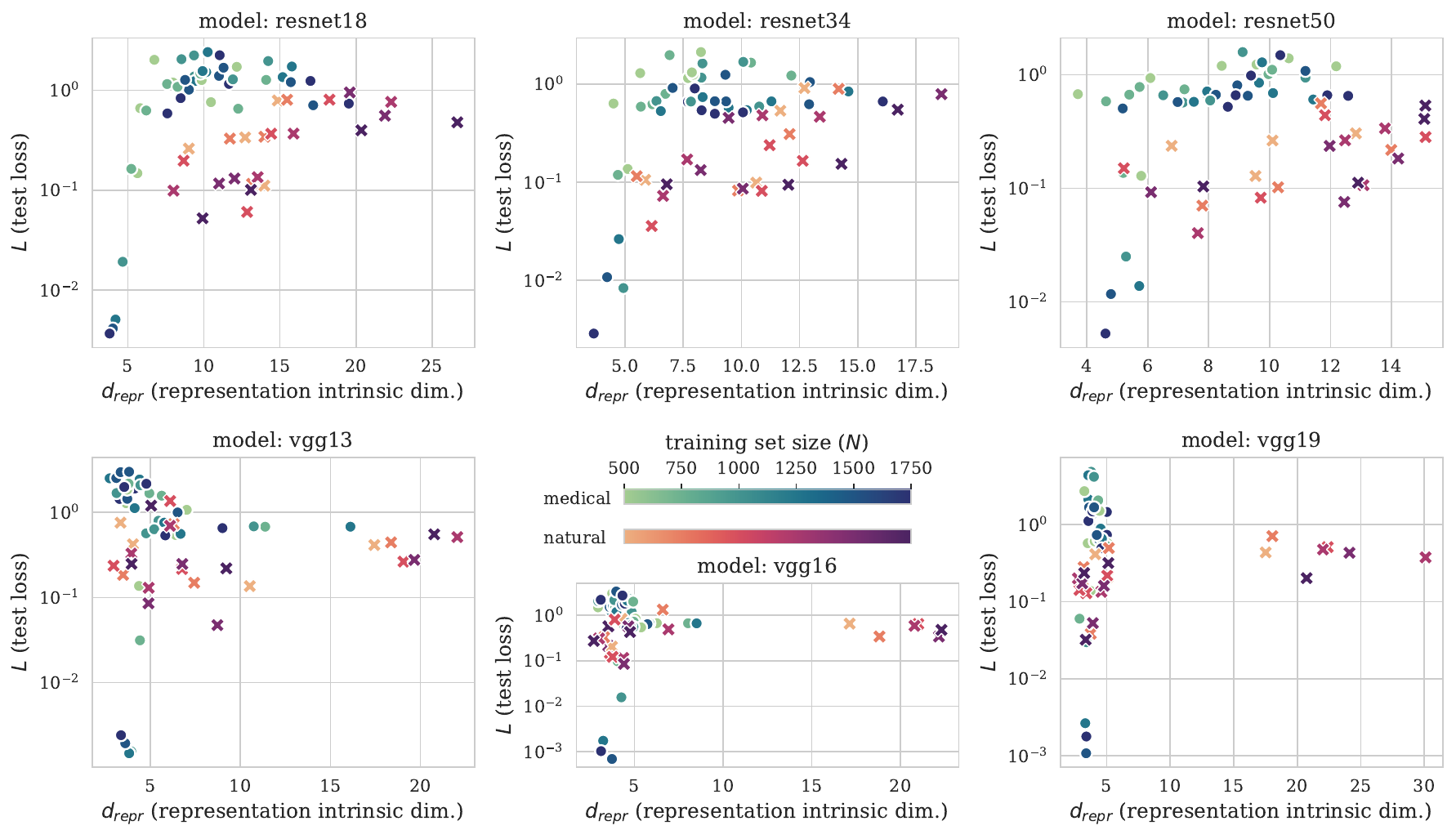}
    \caption{Scaling of log test loss/generalization ability with the intrinsic dimension of final hidden layer learned representations of the training set ($\dr$) for natural and medical datasets, with $\dd$ computed via MLE (Sec. \ref{sec:dest}).}
    \label{fig:loss_vs_reprdim_twonn}
\end{figure}

\begin{figure}[!htbp]
\centering
\includegraphics[width=0.98\textwidth]{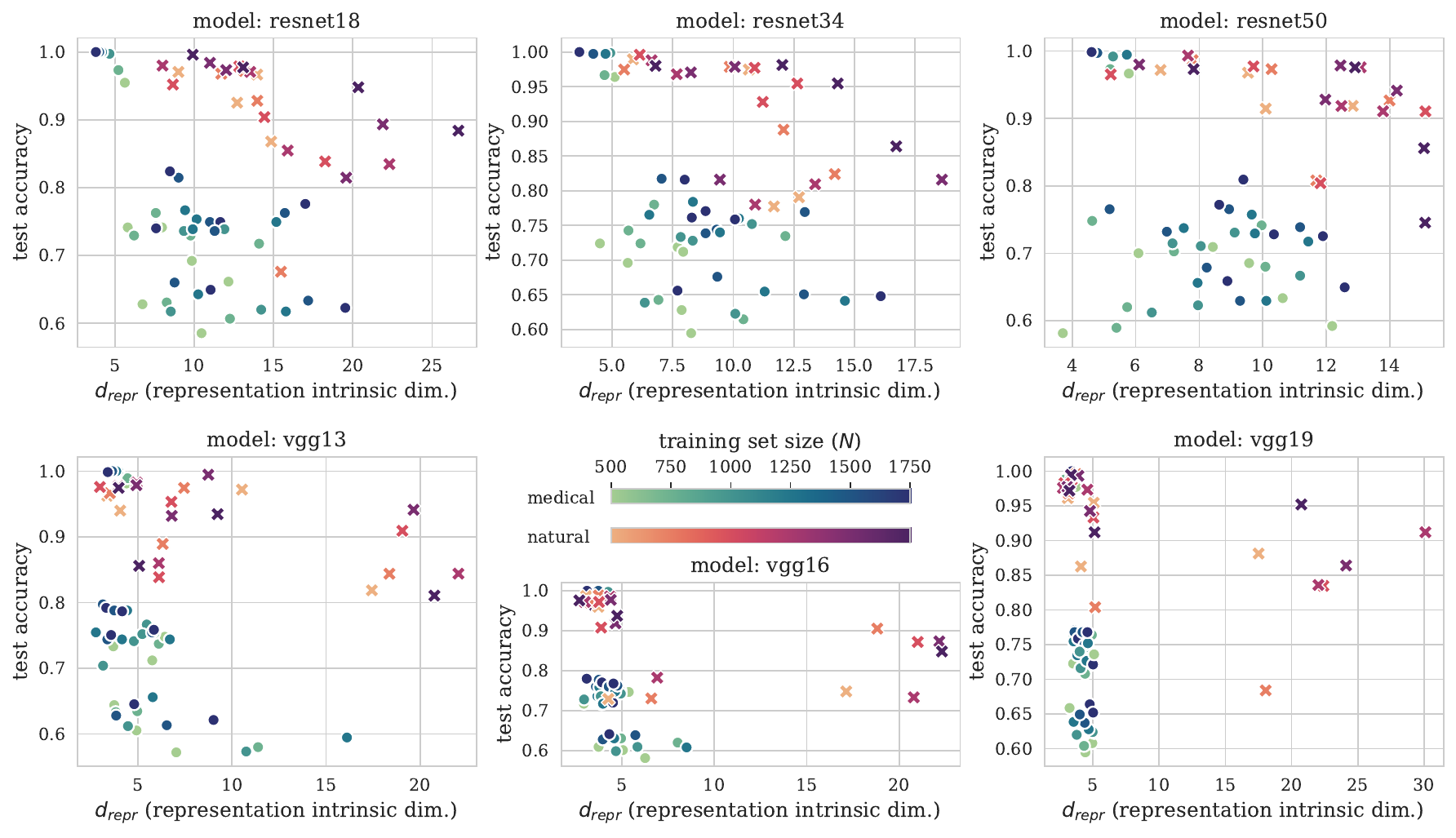}
    \caption{Scaling of test accuracy/generalization ability with the intrinsic dimension of final hidden layer learned representations of the training set ($\dr$) for natural and medical datasets, with $\dd$ computed via MLE (Sec. \ref{sec:dest}).}
    \label{fig:acc_vs_reprdim_twonn}
\end{figure}
\subsection{Bounding Hidden Representation Intrinsic Dimension with Dataset Intrinsic Dimension}
\label{app:moreresults:ddvsdr}

In Fig. \ref{fig:dr_vs_ddtwonn} we show the $\dd$ vs. $\dr$ results as in Fig. \ref{fig:dr_vs_dd}, but with dimensionality estimates computed with TwoNN instead of MLE (Sec. \ref{sec:dest}).

\begin{figure}[!htbp]
\centering
    \includegraphics[width=0.48\textwidth]{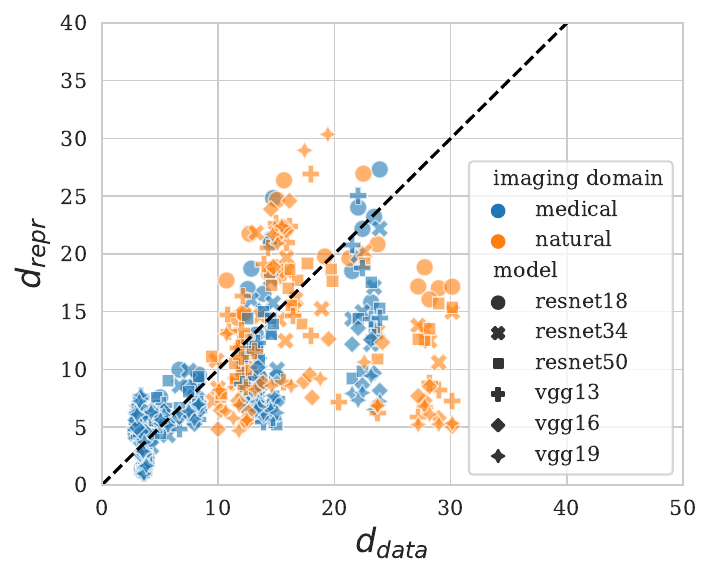}
    \caption{Training dataset intrinsic dimension $\dd$ vs. learned representation intrinsic dimension $\dr$, both computed using TwoNN instead of MLE (Sec. \ref{sec:dest}). Each point corresponds to a (model, dataset, training set size) combination.}
    \label{fig:dr_vs_ddtwonn}
\end{figure}

\subsection{Adversarial Robustness scaling with $\KFh$}
\label{app:moreresults:robust}
\paragraph{Continuation of Sec. \ref{sec:exp:robust}.} In Figs. \ref{fig:atk_vs_kfeps1}, \ref{fig:atk_vs_kfeps4} and \ref{fig:atk_vs_kfeps8}, we show the scaling of test loss penalty due to FGSM adversarial attack with respect to measured dataset label sharpness $\KFh$, for attack $\epsilon$ of $1/255$, $4/255$, and $8/255$, respectively. In Figs \ref{fig:atk_vs_kf_acc_eps1}, \ref{fig:atk_vs_kf_acc_eps2}, \ref{fig:atk_vs_kf_acc_eps4} and \ref{fig:atk_vs_kf_acc_eps8} we instead show the scaling of test \textit{accuracy} penalty, for each FGSM attack $\epsilon$ of $1/255$, $2/255$, $4/255$, and $8/255$, respectively. Finally, in Tables \ref{tab:adv_medonly} and \ref{tab:adv_natonly} we report per-domain correlations of loss penalty and dataset $\KF$, for medical images and natural images respectively.

\begin{figure}[!htbp]
\centering
\includegraphics[width=0.7\textwidth]{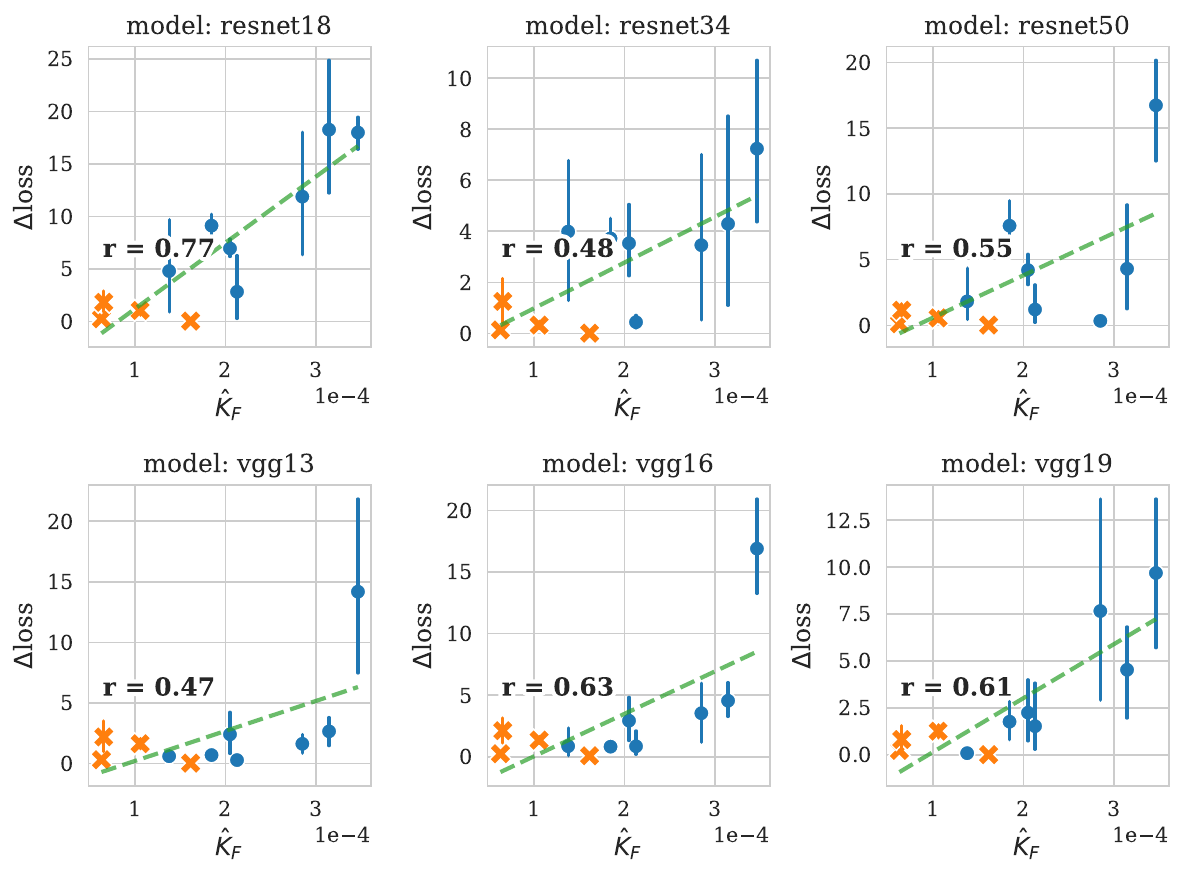}
    \caption{Scaling of test set loss penalty due to $\epsilon=1/255$ FGSM adversarial attack with dataset label sharpness $\KF$ for natural (orange) and medical (blue) datasets.}
    \label{fig:atk_vs_kfeps1}
\end{figure}

\begin{figure}[!htbp]
\centering
\includegraphics[width=0.7\textwidth]{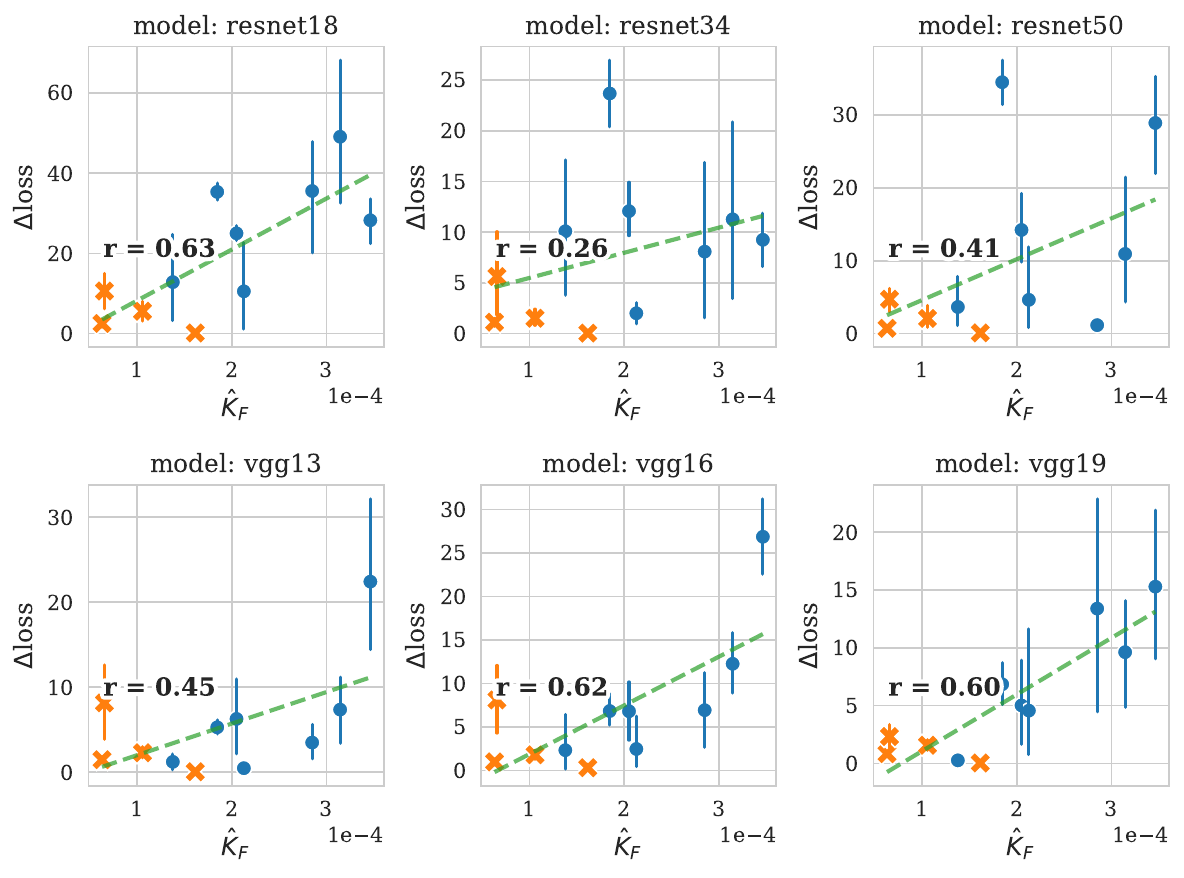}
    \caption{Scaling of test set loss penalty due to $\epsilon=4/255$ FGSM adversarial attack with dataset label sharpness $\KF$ for natural (orange) and medical (blue) datasets.}
    \label{fig:atk_vs_kfeps4}
\end{figure}

\begin{figure}[!htbp]
\centering
\includegraphics[width=0.7\textwidth]{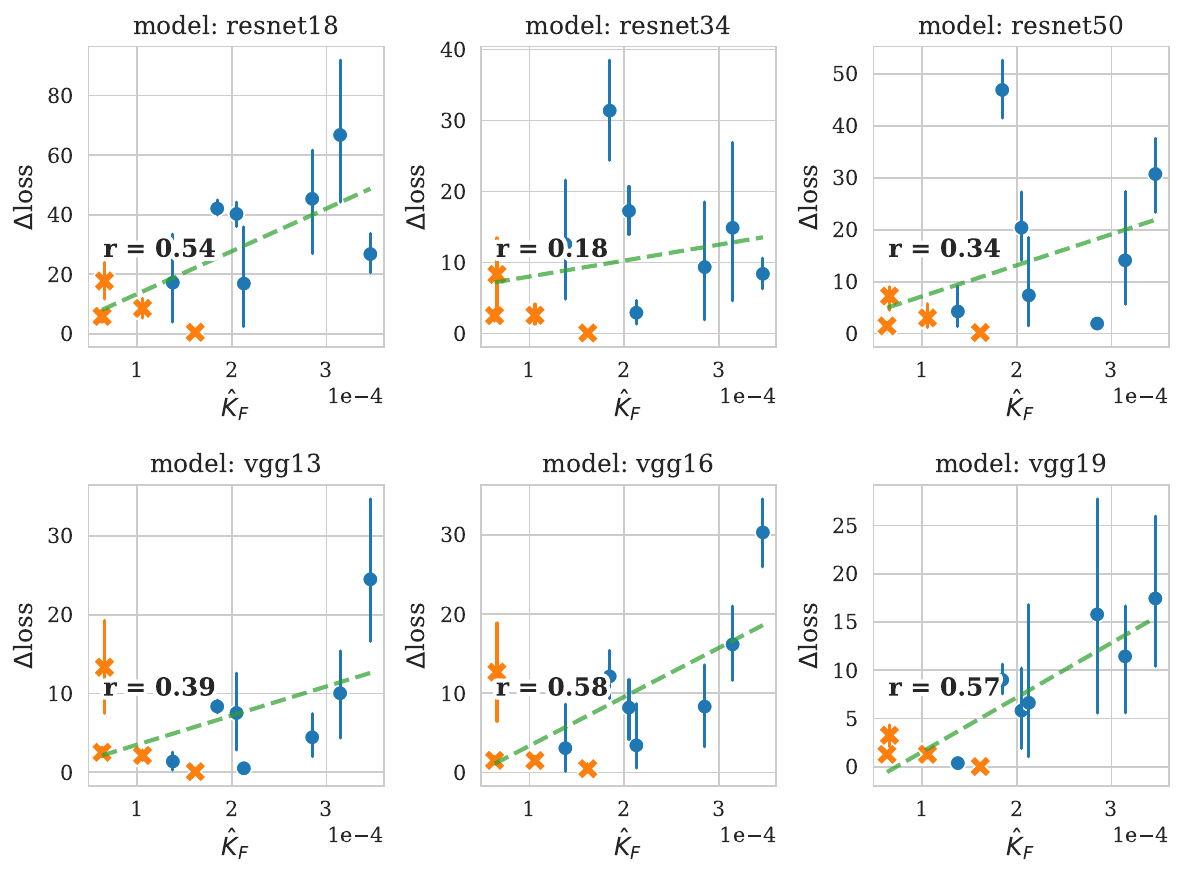}
    \caption{Scaling of test set loss penalty due to $\epsilon=8/255$ FGSM adversarial attack with dataset label sharpness $\KF$ for natural (orange) and medical (blue) datasets.}
    \label{fig:atk_vs_kfeps8}
\end{figure}

\begin{figure}[!htbp]
\centering
\includegraphics[width=0.7\textwidth]{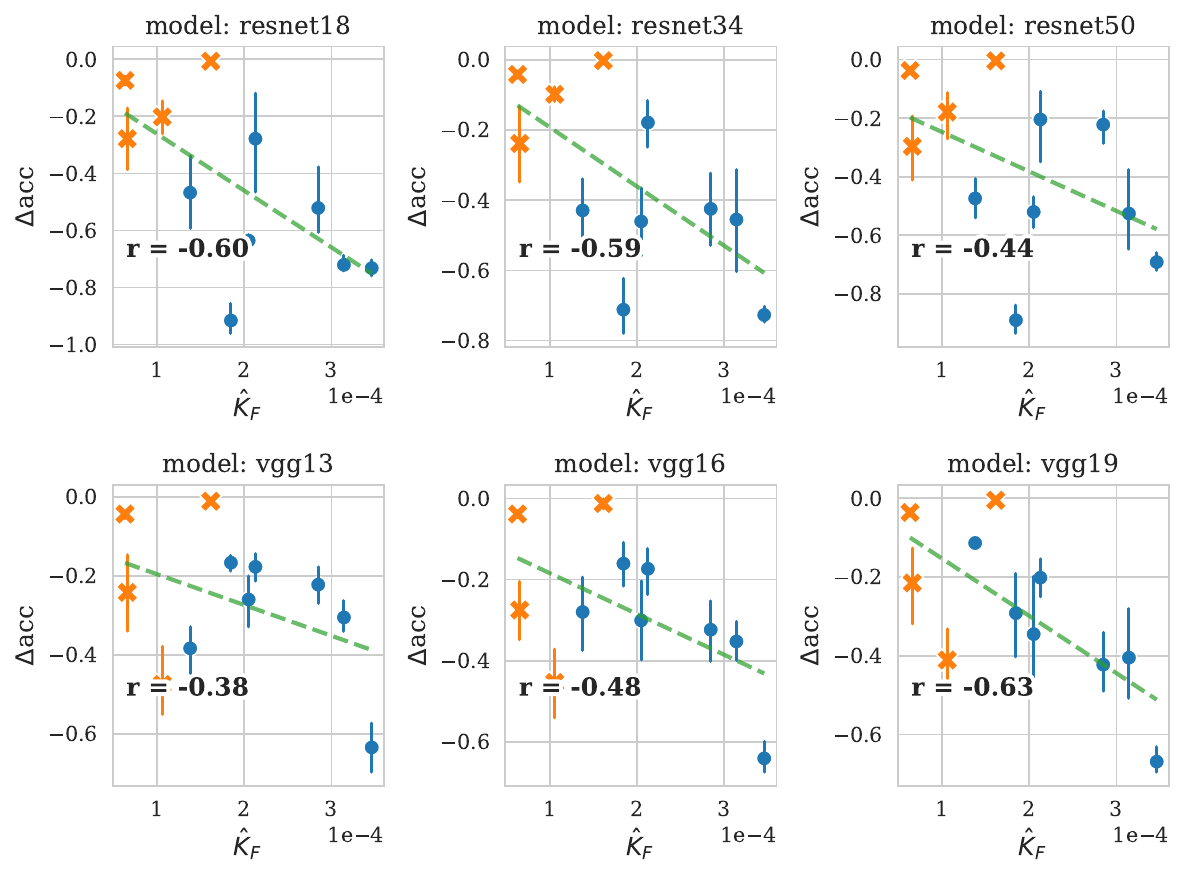}
    \caption{Scaling of test set accuracy penalty due to $\epsilon=1/255$ FGSM adversarial attack with dataset label sharpness $\KF$ for natural (orange) and medical (blue) datasets.}
    \label{fig:atk_vs_kf_acc_eps1}
\end{figure}

\begin{figure}[!htbp]
\centering
\includegraphics[width=0.7\textwidth]{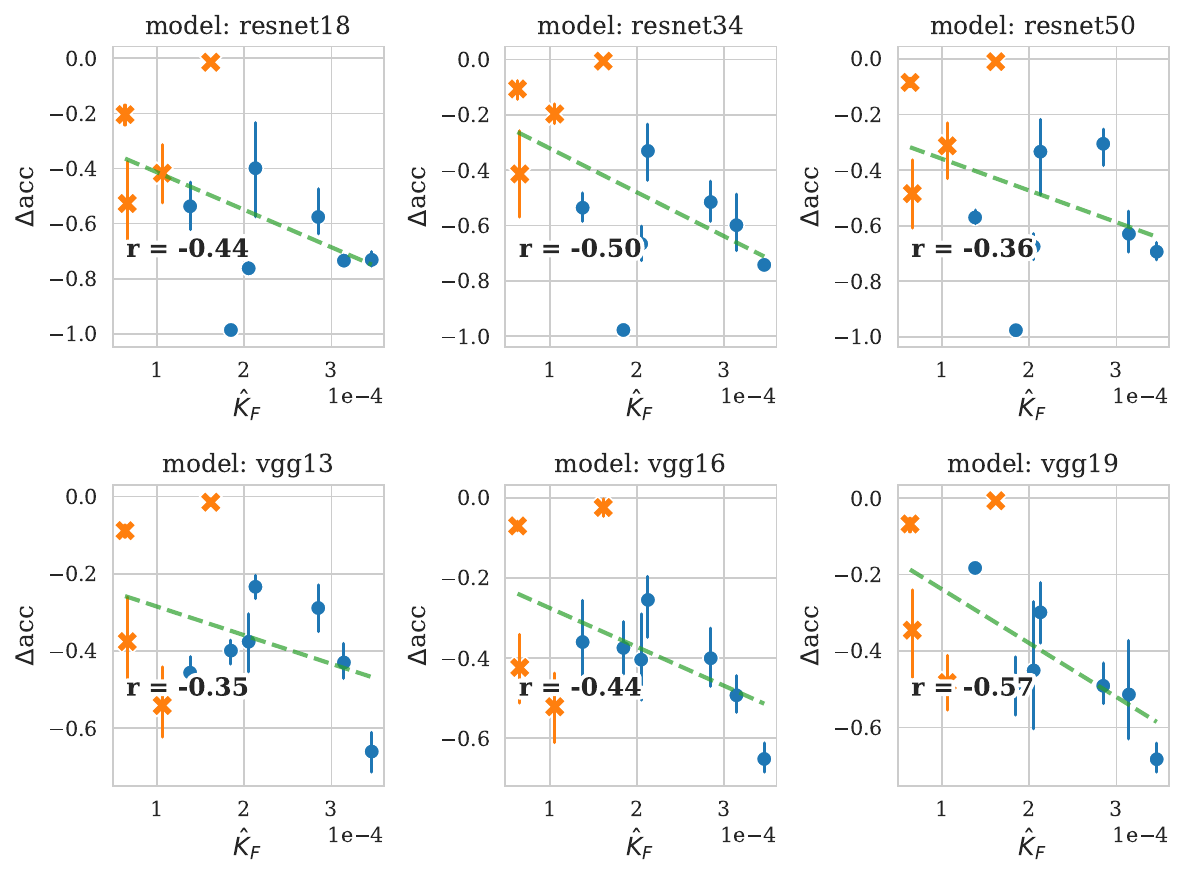}
    \caption{Scaling of test set accuracy penalty due to $\epsilon=2/255$ FGSM adversarial attack with dataset label sharpness $\KF$ for natural (orange) and medical (blue) datasets.}
    \label{fig:atk_vs_kf_acc_eps2}
\end{figure}

\begin{figure}[!htbp]
\centering
\includegraphics[width=0.7\textwidth]{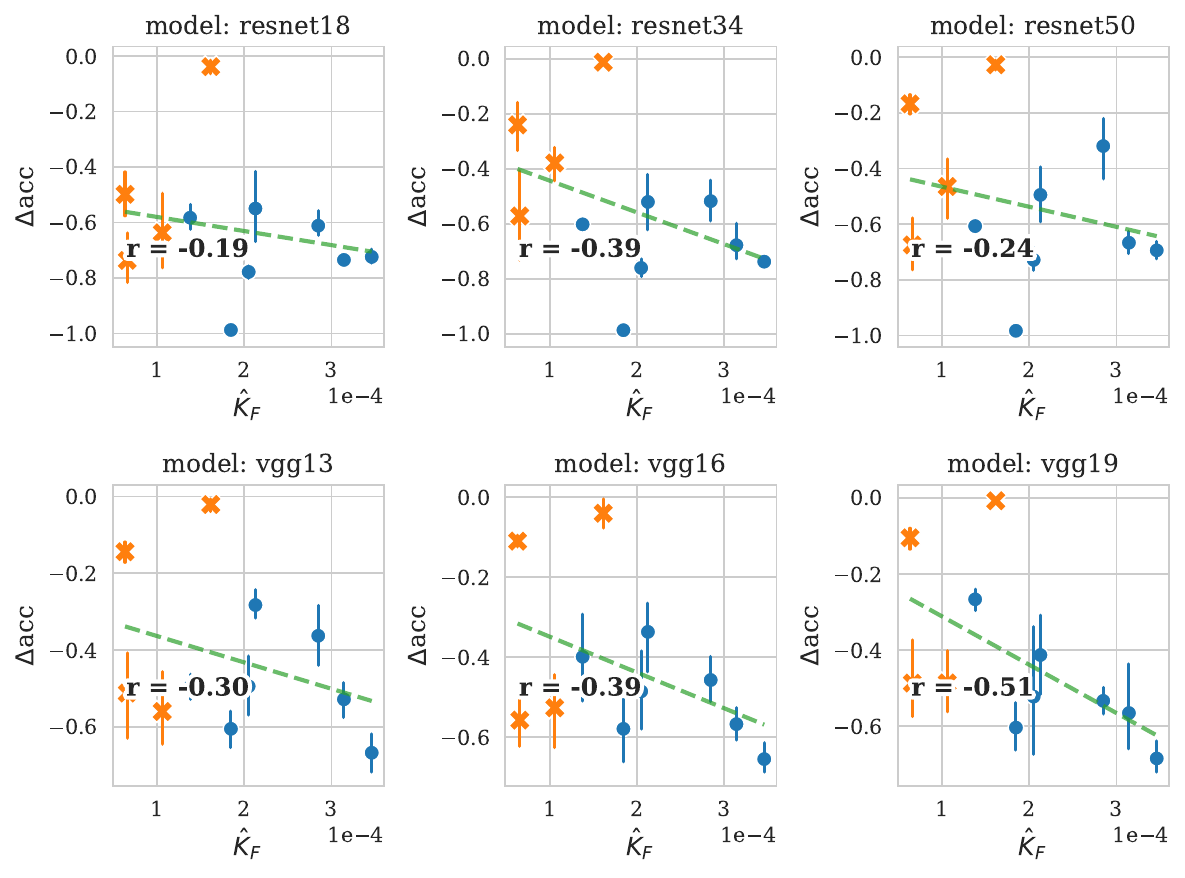}
    \caption{Scaling of test set accuracy penalty due to $\epsilon=4/255$ FGSM adversarial attack with dataset label sharpness $\KF$ for natural (orange) and medical (blue) datasets.}
    \label{fig:atk_vs_kf_acc_eps4}
\end{figure}

\begin{figure}[!htbp]
\centering
\includegraphics[width=0.7\textwidth]{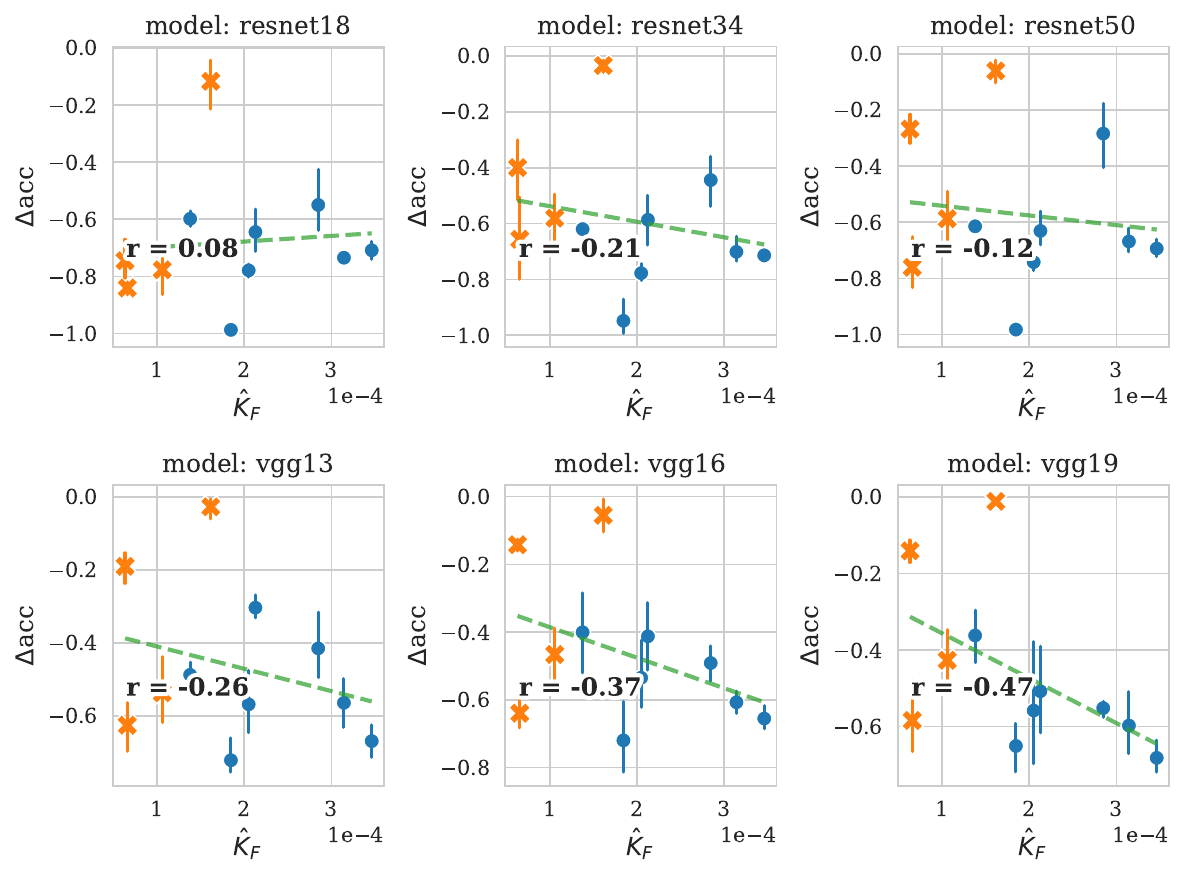}
    \caption{Scaling of test set accuracy penalty due to $\epsilon=8/255$ FGSM adversarial attack with dataset label sharpness $\KF$ for natural (orange) and medical (blue) datasets.}
    \label{fig:atk_vs_kf_acc_eps8}
\end{figure}

\begin{table}
\fontsize{9pt}{9pt}\selectfont
\centering
\begin{tabular}{l||ccc|ccc}
\hline
    Atk. $\epsilon$ & RN-18 & RN-34 & RN-50 & V-13 & V-16 & V-19 \\
    \hline
    $1/255$ & $0.67$ & $0.26$ & $0.43$ & $0.55$ & $0.69$ & $0.6$ \\
    $2/255$ & $0.53$ & $0.01$ & $0.28$ & $0.57$ & $0.71$ & $0.57$ \\
    $4/255$ & $0.41$ & $-0.16$ & $0.14$ & $0.56$ & $0.7$ & $0.53$ \\
    $8/255$ & $0.31$ & $-0.23$ & $0.04$ & $0.56$ & $0.66$ & $0.49$ \\
    \hline
    \end{tabular}
\caption{Pearson correlation $r$ between test loss penalty due to FGSM attack and dataset label sharpness $\KFh$, over all {\bf medical image} datasets and all training sizes. ``RN'' = ResNet, ``V'' = VGG.}
\label{tab:adv_medonly}
\end{table}

\begin{table}
\fontsize{9pt}{9pt}\selectfont
\centering
\begin{tabular}{l||ccc|ccc}
\hline
    Atk. $\epsilon$ & RN-18 & RN-34 & RN-50 & V-13 & V-16 & V-19 \\
    \hline
    $1/255$ & $-0.39$ & $-0.37$ & $-0.39$ & $-0.36$ & $-0.38$ & $-0.24$ \\
    $2/255$ & $-0.42$ & $-0.37$ & $-0.41$ & $-0.42$ & $-0.41$ & $-0.36$ \\
    $4/255$ & $-0.49$ & $-0.41$ & $-0.44$ & $-0.47$ & $-0.43$ & $-0.53$ \\
    $8/255$ & $-0.58$ & $-0.47$ & $-0.48$ & $-0.5$ & $-0.43$ & $-0.66$ \\
    \hline
    \end{tabular}
\caption{Pearson correlation $r$ between test loss penalty due to FGSM attack and dataset label sharpness $\KFh$, over all {\bf natural image} datasets and all training sizes. ``RN'' = ResNet, ``V'' = VGG.}
\label{tab:adv_natonly}
\end{table}

\section{Training and Implementational Details}
\label{app:implementation}

This section provides training and implementation details beyond that of Sec. \ref{sec:data}. We train all models with a binary cross-entropy loss function, optimize by Adam \citep{adam} with a weight decay strength of $10^{-4}$ for $100$ epochs. We use learning rates of $10^{-3}$ for ResNet models on all datasets, and $10^{-4}$ for VGG models on all datasets except SVHN, which required $10^{-6}$ to avoid loss divergence. ResNet-18, -34 and -50 models were trained with batch sizes of $200$, $128$, and $64$, respectively, and $32$ for all VGG models. We do not use any training image augmentations beyond resizing to $224\times224$ and linear normalization to $[0,1]$. We perform all experiments on a 48 GB NVIDIA A6000. 

\section{Medical Image Dataset Details}
\label{app:data}

This section goes into full detail into the binary classification task definitions for each medical image dataset, beyond what is mentioned in Section \ref{sec:data}. We follow the same task definitions for the medical image datasets as in \citet{konz2022intrinsic}. Specifically:
\begin{itemize}
    \item For OAI \citep{Tiulpin2018}, we use the screening packages 0.C.2 and 0.E.1, and define a negative class of X-ray images with Kellgren-Lawrence scores of 0 or 1, and a positive class of images with scores of 2+.
    \item For DBC \citep{saha2018machinedukedbc}, we use fat-saturated breast MRI slices. Slice images with a tumor bounding box label are positive, and any slice at least 5 slices away from a positive slice is negative.
    \item We use the same slice-labeling procedure as DBC for BraTS \citep{menze2014multimodal}, for glioma labels in T2 FLAIR brain MRI slices.
    \item For Prostate MRI \citep{sonn2013prostate}, we use slices from the middle 50\% of each MRI volume. Slices are labeled as negative if the volume's cancer risk score label is 0 or 1, and positive for 2+.
    \item For brain CT hemorrhage detection in RSNA-IH-CT \citep{flanders2020rsnaihct}, we detect for \textit{any} type of hemorrhage.
\end{itemize}

\end{document}